\documentclass{article}

\usepackage{amsmath,amsfonts,bm}

\def\eqref#1{equation~\ref{#1}}

\def\1{\bm{1}}

\DeclareMathAlphabet{\mathsfit}{\encodingdefault}{\sfdefault}{m}{sl}
\SetMathAlphabet{\mathsfit}{bold}{\encodingdefault}{\sfdefault}{bx}{n}

\usepackage{PRIMEarxiv}

\usepackage{natbib}
\usepackage{microtype}
\usepackage{graphicx}
\usepackage{booktabs} %

\usepackage{times}
\usepackage{hyperref}
\usepackage[capitalize,noabbrev]{cleveref}
\usepackage{url}
\usepackage{subcaption}
\usepackage{placeins}
\usepackage{enumitem}
\usepackage{physics}
\usepackage{amssymb}
\usepackage{mathtools}
\usepackage{tabto}
\usepackage{amsthm}

\usepackage{algorithm}
\usepackage{algorithmic}
\usepackage[textsize=tiny]{todonotes}

\usepackage[utf8]{inputenc} %
\usepackage[T1]{fontenc}    %
\usepackage{hyperref}       %
\usepackage{booktabs}       %
\usepackage{nicefrac}       %
\usepackage{microtype}      %
\usepackage{xcolor}         %
\usepackage{multirow}
\usepackage{adjustbox}

\newcommand{\stderr}[1]{{\footnotesize $\pm$ #1}}

\theoremstyle{plain}
\newtheorem{theorem}{Theorem}[section]

\theoremstyle{definition}

\theoremstyle{remark}

\title{Learning from Preferences and Mixed Demonstrations in General Settings}

\author{
    Jason R.~Brown$^{1}$ \\
    \texttt{jrb239@cam.ac.uk} \\
    \And
    Carl Henrik Ek$^{1}$ \\
    \texttt{che29@cam.ac.uk} \\
    \And
    Robert D.~Mullins$^{1}$ \\
    \texttt{robert.mullins@cl.cam.ac.uk} \vspace{1em} \\
    \textnormal{\small $^{1}$Department of Computer Science and Technology, University of Cambridge}
}

\begin{document}

\maketitle

\begin{abstract}
    Reinforcement learning is a general method for learning in sequential settings, but it can often be difficult to specify a good reward function when the task is complex.
    In these cases, preference feedback or expert demonstrations can be used instead.
    However, existing approaches utilising both together are often ad-hoc, rely on domain-specific properties, or won't scale.
    We develop a new framing for learning from human data, \emph{reward-rational partial orderings over observations}, designed to be flexible and scalable.
    Based on this we introduce a practical algorithm, LEOPARD: Learning Estimated Objectives from Preferences And Ranked Demonstrations.
    LEOPARD can learn from a broad range of data, including negative demonstrations, to efficiently learn reward functions across a wide range of domains.
    We find that when a limited amount of preference and demonstration feedback is available, LEOPARD outperforms existing baselines by a significant margin.
    Furthermore, we use LEOPARD to investigate learning from many types of feedback compared to just a single one, and find that combining feedback types is often beneficial. 
\end{abstract}

\section{Introduction}

\begin{figure*}[t]
    \centering
    \includegraphics[width=\textwidth]{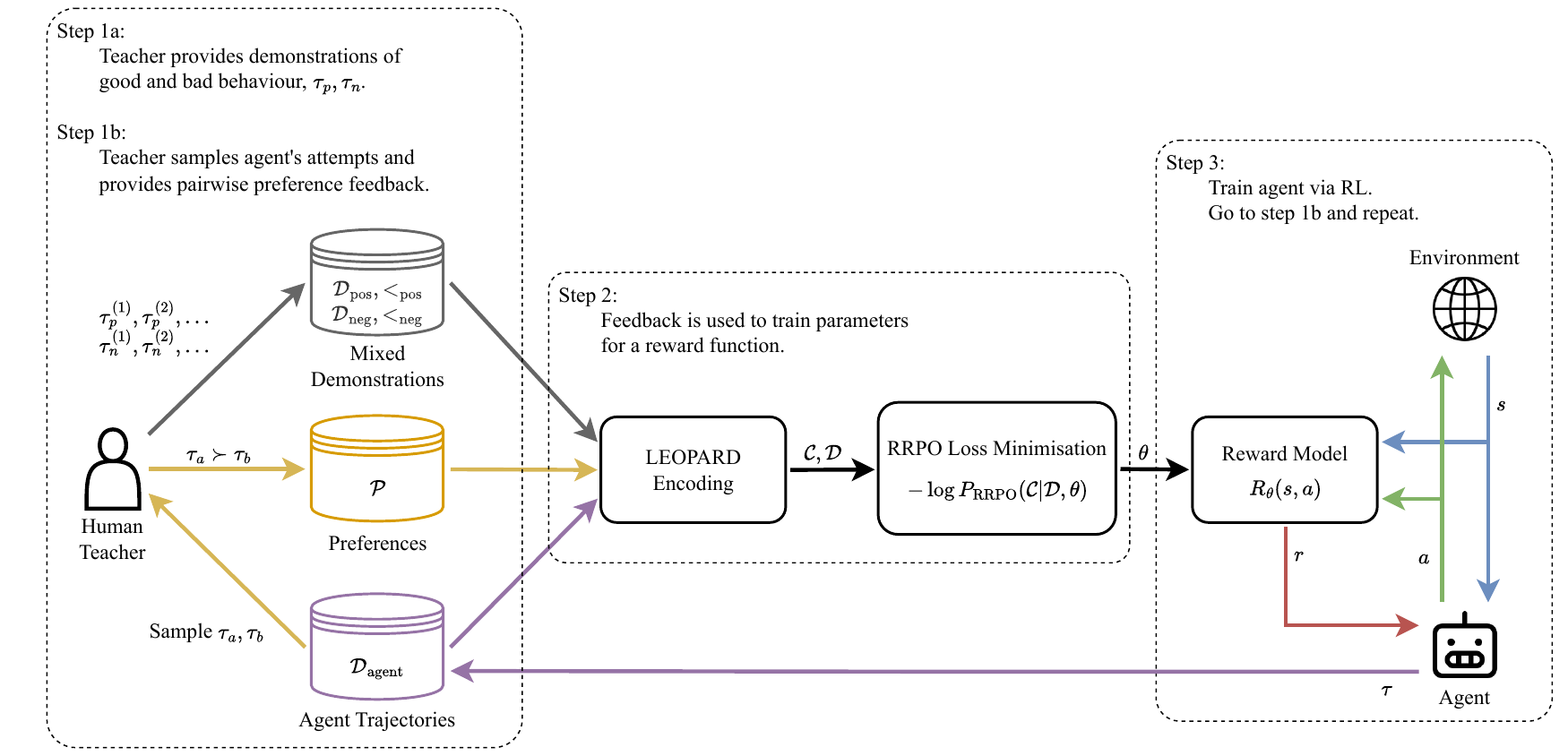}
    \caption{High-level overview of the LEOPARD algorithm. A teacher provides ranked examples of positive and negative demonstrations, as well as providing preference feedback over the agent's behaviour. This is used to train a reward model that the agent optimises via standard RL. The process is iterative. The LEOPARD encoding is given in \Cref{eq:c,eq:d}, and $P_\text{RRPO}$ is detailed in \Cref{eq:main}.}
    \label{fig:leopard}
\end{figure*}

Reinforcement Learning (RL) is a branch of machine learning where an agent learns a behavioural policy by interacting with an environment and receiving rewards.
These rewards are determined by a reward function that mathematically encodes the objective of the agent.
For real-world practical applications of RL, such as robotics or Large Language Model (LLM) finetuning, the specification of the reward function poses a difficult challenge.
Two popular RL subfields try to solve this problem by leveraging human data in order to learn what the reward function should be, typically by optimising a parameterised function such as a neural network.

Inverse RL (IRL) utilises human-provided demonstrations of the correct behaviour and tries to learn a reward function for which only the demonstrations, or similar behaviour, are near-optimal \citep{ng2000algorithms,ziebart2008maximum,wulfmeier2015deep}.
RL from Human Feedback (RLHF) presents the human with pairs of agent--behaviour examples.
For each pair, the human decides which of the pair is better, and the reward function is trained to reproduce this preference \citep{christiano2017deep}.
Both methods iterate between reward model and agent training.
For many applications it might be possible and desirable to generate and learn from both of these feedback types, rather than committing to a single one.
The current standard approach is to first train on demonstrations and then finetune the resulting model with preferences \citep{ibarz2018reward,palan2019learning,biyik2022learning}.
Some methods have been proposed to more effectively leverage the information encoded in both the preferences and demonstrations, but this is still largely ad-hoc or specific to certain domains \citep{krasheninnikov2021combining,mehta2023unified,brown2019extrapolating}.

In an attempt to solve this problem for general domains---and for many types of feedback including preferences and demonstrations---\citet{jeon2020reward} propose Reward-Rational Choice (RRC).
This frames the human feedback data as Boltzmann-Rational choices according to a probability distribution which has been induced by some unknown true reward function.
Learning the reward function can then be cast as a supervised learning problem where we try to replicate these choices.
Unfortunately, RRC is often difficult to implement in practice.
For example, in the case of demonstration feedback, it is treated as a choice over all possible behaviours.
This space is difficult to optimise over if it is very large and the reward function is non-linear, as is often the case for practical problems.
Additionally, RRC cannot encode multiple selections for the `optimal choice', nor can it encode more complex relationships between behaviours such as rankings or dis-preference.

To address these limitations, we introduce a new mathematical framework which frames the human feedback as \emph{reward-rational partial orderings over observations} (RRPO).
These partial orderings are then encoded by sets of Boltzmann-Rational choices, analogous to the Plackett-Luce ranking model \citep{marden1996analyzing}.
From this we derive LEOPARD: Learning Estimated Objectives from Preferences And Ranked Demonstrations, which is outlined in \Cref{fig:leopard}.
In addition to preferences and ranked (positive) demonstrations, LEOPARD can also learn from ranked negative/failed demonstrations.
Preferences are interpreted as they are in RRC, but positive demonstrations are interpreted as being preferred to the agent's current and future behaviour, or the opposite in the case of negative demonstrations.
Demonstration rankings, if available, are also cleanly translated into partial orderings.

LEOPARD can utilise this wide range of feedback types simultaneously, making it effective at learning useful reward functions in general environments.
We find that when preference and positive demonstration feedback is available, it outperforms the standard baseline of performing DeepIRL on the demonstration data, and then finetuning using preferences.
It also beats Adversarial Imitation Learning with Preferences (AILP), another preference and positive demonstration learning algorithm.
Additionally, when only positive demonstration feedback is available, LEOPARD generally outperforms DeepIRL and AILP, and at worst matches them.
Finally, we use LEOPARD to investigate learning from a variety of feedback types, compared to learning from a single one.

In summary, we make the following contributions:
\begin{enumerate}[itemsep=0.5ex,parsep=0.5ex,topsep=0ex]
    \item We introduce RRPO, a practical and general framework for interpreting human feedback.
    \item We introduce LEOPARD, an effective and scalable method for learning from preferences, and positive/negative ranked demonstrations.
    \item We investigate the dynamics of learning from many types of feedback vs focussing on only a single one.
\end{enumerate}

\section{Related Work and Background}

\subsection{Demonstration-Based RL}\label{sec:irl}

A popular paradigm for learning from demonstrations is Inverse RL (IRL), where the demonstrations are used to learn a reward function \citep{ng2000algorithms}.
This overcomes many issues of behavioural cloning, which aims to directly mimic the given demonstrations \citep{bratko1995behavioural}.
Many current methods for IRL are based on the principle of \emph{maximum (causal) entropy} (MaxEnt; MCE), established by \citet{ziebart2008maximum,ziebart2010modeling}.
This learns a reward function that captures the fact that the human demonstrations are optimal, but beyond this, it tries to have as much uncertainty about the reward dynamics as possible.
Assuming a deterministic environment simplifies MCE into MaxEnt, and this assumption has been used to extend this class of methods into settings with high-dimensional observation spaces, e.g. DeepIRL \citep{wulfmeier2015deep}.
Advanced extensions of DeepIRL have been proposed, leveraging methods such as importance sampling \citep{finn2016guided}, or GAN-style architectures \citep{fu2018learningrobustrewardsadversarial}.
For a more comprehensive introduction to MCE and its derivatives, see \citet{gleave2022primermaximumcausalentropy}.
Our proposed algorithm does not reduce to a MaxEnt-derived method in the demonstration only case, but is still inspired by the principle and is of a similar form.
Bayesian methods in IRL have also been explored \citep{ramachandran2007bayesian,brown2020bayesian}, highlighting how a probabilistic framing of the inverse learning problem can be useful.

\subsection{Preference-Based RL}\label{sec:rlhf}

RLHF \citep{christiano2017deep} uses preferences---pairwise comparisons of agent behaviour---to learn a reward function for high-dimensional RL environments via the Bradley-Terry preference model \citep{bradley1952rank}:
\begin{align}
    \label{eq:rlhf}
    P_{\text{RLHF}}(\tau_a \succ \tau_b | \theta)
    &= \frac{
        \exp (R_\theta(\tau_a))
    }{
        \exp (R_\theta(\tau_a)) + \exp (R_\theta(\tau_b))
    },
\end{align}
where $R_\theta$ is a parameterised reward function and $\tau_a$ and $\tau_b$ are trajectory-fragments\footnote{
Contiguous subsequences of trajectories.
}.
A 3-step iterative procedure is used: sampling of new comparisons of recent agent behaviour, fitting the reward model to the comparison dataset, and training of the policy on the learnt reward function.
The reward model is fitted by minimising the average negative log-likelihood of the preference model across all pairs of trajectory-fragments.
\citet{wirth2017survey} provides a survey of other preference based RL methods prior to RLHF.

Recently, RLHF has been used for instruction and safety-finetuning large language models (LLMs) into chat systems \citep{ouyang2022training,bai2022training,bahrini2023chatgpt}.
These are referred to as `PPO-based' to disambiguate them from other methods which finetune LLMs from preferences without learning a reward function, such as DPO \citep{rafailov2024direct}.
Often the LLM is trained on demonstrations via behavioural cloning before PPO/DPO.
Concerns for the safety, reliability, and misuse of LLMs has led to a plethora of research on how best to utilise human preferences/rankings to train these models \citep{cao2024towards,chaudhari2024rlhf}.
Despite this, there is a broad lack of principled use of other feedback types for LLM safety and finetuning.

\subsection{Combining Demonstrations and Preference Feedback}\label{sec:existing_work_types_of_feedback}

As mentioned in the case for LLMs, demonstration and preference feedback are typically combined by pre-training on the demonstration data using IRL/behavioural-cloning methods, and then finetuning the resulting reward model on preferences using RLHF \citep{ibarz2018reward,palan2019learning,biyik2022learning}.
This works well in practice, but it is unclear how to add in further reward information, such as negative demonstrations or the relative rankings of demonstrations.
Additionally, information that is present only in the demonstrations might be forgotten or never used, especially if strong regularisation is applied to the reward model, or the RL policy does not sufficiently explore when training on the demonstrations.

More sophisticated combinations of preferences and demonstrations have been considered.
\citet{krasheninnikov2021combining} sampled trajectories according to reward functions optimal for the preferences, and applied MCE-IRL.
This approach is computationally expensive and limited to linear reward functions over tabular MDPs.
\citet{mehta2023unified} combine preferences and demonstrations alongside corrections \citep{bajcsy2017learning}, but leverage domain-specific properties of robotics and encode their demonstrations using trajectory-space perturbations.
This method is not applicable outside of robotics, and loses information about how demonstrations are better than most of trajectory-space, not just better than nearby trajectories.
\citet{brown2019extrapolating} and \citet{brown2019deep} both subsample ranked demonstrations to produce preferences for training the reward model, giving good results but still losing information about how those demonstrations might be preferred to other trajectories.
\citet{taranovic2022adversarial} combines a novel preference loss with adversarial imitation learning. This is the closest to our work, and so we test against it as a baseline.
We also note that none of these methods can be easily extended to other types of feedback.

\subsection{Learning From Other Types of Feedback}\label{sec:existing_work_other_types}

Other types of feedback have been explored in isolation, such as negative demonstrations \citep{xie2019learning},\footnote{They refer to these as `failed demonstrations'.} improvements \citep{jain2015learning}, off-signals \citep{hadfield2017off}, natural language \citep{matuszek2012joint}, proxy reward functions \citep{hadfield2017inverse}, rankings \citep{myers2022learning}, scalar feedback \citep{knox2008tamer,wilde2021learning}, and even the initial state \citep{shah2019preferences}.
Of these, \citet{myers2022learning} is most similar to our work, as they use a Plackett-Luce model to to interpret rankings to train a reward model.
We differ by considering many more types of feedback, showing how they can also be interpreted as orderings, and then use this to learn from preferences and mixed demonstrations.

\citet{jeon2020reward} interpret many of types of feedback as part of an overarching formalism, \emph{reward-rational (implicit) choice} (RRC), providing a mathematical theory for reward learning that combines different types of feedback.
RRC interprets each piece of human feedback as a Boltzmann-Rational choice $C$ from some (possibly implicit) set of choices $\mathcal{D}$ with rationality coefficient $\beta$.
A grounding function, $\psi$, maps choices to distributions over trajectories.
The expected reward over these distributions gives the value for each choice under the Boltzmann-Rational model, according to some reward function $R_\theta$.
For a deterministic $\psi$ simplifies to:
\begin{align}
    P_{\text{RRC}}(C|\mathcal{D},\theta)
    &=
    \frac{\exp (\beta R_\theta(\psi(C)))}
    {\sum_{C' \in \mathcal{D}} \exp (\beta R_\theta(\psi(C')))}.
\end{align}

Many of the formalisms of feedback in RRC, such as demonstrations, are not generally directly applicable as they naively require a large---possibly infinite---set of choices.
Practical applications may rely on finite state-spaces, linear reward functions, unbounded surrogate loss functions, or sampling-based methods, each with their own pros and cons.
We take inspiration from RRC, but show that formulating feedback as orderings leads to some more natural interpretations for mixed demonstrations without the need for such additional methods.

\section{Method}

We propose LEOPARD, a method for learning from preferences, positive demonstrations, negative demonstrations, and partial rankings over the given demonstrations.
It is practical, flexible, and applicable to many environments.
The aim is that a practitioner can give any and all feedback possible to the learning algorithm, and this feedback can be continuously learnt from and added to.
First, we develop a general mathematical framework, reward-rational partial orderings (RRPO), extending that of deterministic reward-rational choice (RRC, \citet{jeon2020reward}).
Then, we apply this to the specific case of learning from preferences and mixed demonstrations.

\subsection{Reward Rational Partial Orderings}\label{sec:rrpo}

To ensure the general applicability of our theoretical formalisms, we assume that only the trajectories our reward optimisation procedure has access to are directly observed.
These could be generated during the agent's training or provided by the human in the case of demonstrations.
This is assumed as feasible trajectories often sit on an unknown manifold in a high-dimensional observation space, causing issues for sampling or augmentation-based approaches.\footnote{For example, consider the space of all images vs ones which are plausible 3D scenes.}
We'd expect that reward functions capturing complex desirable behaviour would not be linear, but that they could at least be approximated sufficiently by some differentiable parameterised function.

Our key insight is to interpret human feedback as a set of Boltzmann-Rational choices encoding strict partial orderings over the trajectory-fragments we can directly observe, where a fragment is a contiguous subsequence of a trajectory.
For each item in the partial order, we `choose' that element out of a set containing itself and all elements strictly less than it.
This is analogous to the Plackett-Luce ranking model \citep{marden1996analyzing}, and is equivalent when the ordering can be viewed as a total ordering embedded in some larger set.
Similar to RRC, each partial ordering is assumed to be independent given the reward function.
Since a partial order may encode a single element being greater than all others with no other relations, this generalises deterministic choices of RRC.

Formally, let $\mathcal{D} = \{\tau_i\}_i$ be the set of all possible fragments of trajectories we have access to, $\mathcal{C} = \{<_j\}_j$ the set of partial orderings representing human feedback, and $R_\theta$ our reward function parameterised by $\theta$.
We define the likelihood of $\theta$ under RRPO as follows:
\begin{align}
    P_{\text{RRPO}}(\mathcal{C}|\mathcal{D}, \theta)
    &= \prod_{(\tau_i, <_j) \in \mathcal{D} \times \mathcal{C}} P(<_j|\tau_i), \\
    P(<_j|\tau_i)
    &= \frac{
        e^{\beta_j R_\theta(\tau_i)}
    }{
        e^{\beta_j R_\theta(\tau_i)} + \sum_{\tau_k \in \mathcal{D}} \mathbf{1}_{\tau_k <_j \tau_i} e^{\beta_j R_\theta(\tau_k)}
    },\label{eq:main}
\end{align}
where $\beta_j$ is the rationality coefficient for feedback $j$.
$\beta$s should be equal if the type and source of feedback is the same, e.g. two pairwise preferences given by the same person.
Note that when the partial orderings are sparse, many terms of the product become unity and can be ignored.
We perform gradient descent on the negative-log of \cref{eq:main} combined with a regularising term, giving the loss function below:
\begin{align}
    \label{eq:loss}
    \mathcal{L}_\text{RRPO}(\theta)
    =& -\log P_\text{RRPO}(\mathcal{C}|\mathcal{D}, \theta) + \mathcal{L}_\text{Smooth}(\mathcal{D}, \theta).
\end{align}
The smoothing term penalises the first derivative of the reward function over trajectories and leads to better shaped reward functions that are easier for the RL agent to learn from.
It is inspired by previous work \citep{finn2016guided}, and empirically we found it works well.
Specific details are given in \Cref{app:smoothness_loss}.

A nice property of $\mathcal{L}_\text{RRPO}$ is that when minimised it faithfully represents the partial orderings.
More precisely, upper bounds on the loss give rise to lower bounds on all reward differences between fragments that are related by some partial ordering.
This is stated formally and proved in \cref{thm:rrpo_loss} of \Cref{app:proofs}.
As a special case, if the loss is below $\log 2$ then all reward differences must have the correct sign, i.e. the reward function induces an ordering that never disagrees with the human feedback.

In order to demonstrate the generality of RRPO, in \cref{sec:rrc_vs_rrpo} we give useful formalisms for many types of feedback and compare them with their RRC counterparts where applicable.
We see that RRPO can cover many of the feedback types that RRC does, and yet always operates over directly observable trajectory fragments, making gradient-based optimisation feasible without needing additional assumptions or augmentations.

\subsection{LEOPARD}\label{sec:leopard}

Whilst we can apply the framework above to many types of feedback, we now focus on the case of combining preferences with mixed demonstrations.
By mixed demonstrations, we mean ones which may be positive or negative, and that within these two groups, we may have access to the relative rankings of each demonstration.
Note that not only can LEOPARD can utilise many types of feedback, it can operate with any subset of them.

Pairwise preferences of $\tau_a \succ \tau_b$ are each interpreted as separate partial orderings of only $\tau_b < \tau_a$.\footnote{By interpreting each preference as its own partial ordering, we avoid potential issues of symmetry and non-transitivity.}
All positive demonstrations are interpreted as a single partial ordering that prefers the positive demonstrations to any agent trajectories, and encodes any relative rankings of the positive demonstrations themselves.
Negative demonstrations are interpreted likewise, but these partial orderings prefer agent trajectories over the negative demonstrations.

Formally, let $\mathcal{D}_{\text{pos}}, <_{\text{pos}}$, and $\mathcal{D}_{\text{neg}}, <_{\text{neg}}$ be the sets of trajectories and partial orderings encoding given rankings of positive and negative demonstrations, respectively.
Let $\mathcal{D}_{\text{agent}}$ be the set of trajectories sampled from the agent's behaviour.
Let $\mathcal{P} = \{(\tau_a, \tau_b)_i\}_i$ be the preference dataset---a set of ordered pairs of trajectory-fragments in which the first is preferred, and $R_\theta$ our parameterised reward function.
We first form our set of partial orderings from the preferences, $\mathcal{C}_\text{pref}$, and the set of trajectory fragments contained with the preferences, $\mathcal{D}_\text{pref}$:
\begin{align*}
    \mathcal{C}_\text{pref}
    =& ~ \{\{\tau_b < \tau_a\} | (\tau_a, \tau_b) \in \mathcal{P} \}, \\
    \mathcal{D}_\text{pref}
    =& \bigcup_{(\tau_a, \tau_b) \in \mathcal{P}}\{\tau_a, \tau_b\}. \nonumber
\end{align*}
Next we form the partial ordering that represents our positive and negative demonstrations being respectively preferred or dis-preferred to the agent's behaviour $<_\text{demos-vs-agent}$,\footnote{
    A slight exception to this formulation is used for Cliffwalking, where agent trajectories can easily be as bad as negative demonstrations.
    The demo partial rankings are in this case split, one preferring positive demonstrations to agent trajectories, and another preferring positive to negative demonstrations.
} and then combine that with the explicitly given rankings over the demonstrations to form the full demonstration partial ordering, $<_\text{demos}$:
\begin{align*}
    <_\text{DemosVsAgent}
    =& ~ \{\tau_n < \tau_a < \tau_p | (\tau_n, \tau_a, \tau_p) \in \mathcal{D}_\text{neg} \times \mathcal{D}_\text{agent} \times \mathcal{D}_\text{pos}\}, \\
    <_\text{Demo}
    =& ~ \bigcup \{<_\text{pos}, <_\text{neg}, <_\text{demos-vs-agent}\}.
\end{align*}
Finally, we form our full set of partial orderings, $\mathcal{C}$, and trajectory fragments, $\mathcal{D}$:
\begin{align}
    \label{eq:c}
    \mathcal{C}
    =& ~ \{<_\text{Demo}\} \cup \mathcal{C}_\text{Pref}, \\
    \label{eq:d}
    \mathcal{D}
    =& ~ \bigcup \{\mathcal{D}_\text{pos}, \mathcal{D}_\text{neg}, \mathcal{D}_\text{agent}, \mathcal{D}_\text{pref} \}.
\end{align}
These are then given to the loss function, \cref{eq:loss}, which is used to fit $\theta$.

Like in the case for RLHF, our dependencies on agent behaviour means we need to iterate between sampling new preferences, optimising for \cref{eq:loss}, and training the agent's policy.\footnote{If there were an existing set of preferences and agent trajectories, the method could be applied offline by simply optimising for \cref{eq:loss}.}
Our algorithm is illustrated in \Cref{fig:leopard} and the full training procedure is given in \cref{alg:main} in \Cref{app:algorithm_details}, along with details on reward model training.

\section{Experiments}

\subsection{Experimental Setup}

We test our method on several environments against common baselines in order to evaluate its performance across a broad variety of domains.
Additionally, we also vary the proportions and amounts of different types of feedback used for learning to investigate the effects of this on performance.
In order to reduce the cost of testing our method and facilitate hyperparameter tuning with many repetitions, we synthetically generate preferences, demonstrations, and their rankings.
We generate preferences by sampling using the sigmoid of the reward difference between the two fragments under comparison as the probability of preference.
We generate demonstrations by training an agent on the ground truth reward function and then sampling its trajectories, with their ground truth reward determining their relative rankings.
For further details, see \Cref{app:feedback}.
For each combination of environment, algorithm, and amount of feedback, we run 16 random seeds and report the mean and standard error over each training iteration.
Standard errors are computed via the typical method of dividing the empirical variance by the square root of the sample size.

We evaluate on four environments from the Gymnasium \citep{towers2024gymnasium} test suite: Half Cheetah (MuJoCo), Cliff Walking (Toy Text), Lunar Lander (Box2D), and Ant (MuJoCo).
This covers a range of continuous and discrete observation and action spaces, reward sparsities, and overall complexities.
We require a finite horizon to reduce complications from the preference and demonstration learning, so some environments required modification.
These and other environment details and hyperparameters are given in \Cref{app:env_details}.

To evaluate LEOPARD, we need to choose the right amount of available feedback.
Too little will make it impossible to get a good ground truth reward, regardless of the qualities of the training algorithm.
Likewise, an overwhelming amount of feedback makes learning too easy, and many algorithms could provide good final performance.
Thus, we first tested how many preferences or positive demonstrations LEOPARD needs to perform moderately well, but not excellently, in the single feedback type case.
This then makes it easy to identify which of the other methods are better or worse than it.
For experiments involving more than one feedback type, we used a normalised weighted combination of the amount of preferences or demonstrations used in the single feedback type experiments.\footnote{
    E.g. For the preferences and positive demonstrations case, we use 50\% of the maximum number of preferences and 50\% of the maximum number of positive demonstrations.
}

\subsection{LEOPARD vs Baselines}\label{sec:results_baselines}

\begin{figure*}[h]
    \centering
    \begin{subfigure}[b]{0.4\textwidth}
        \includegraphics[width=\textwidth]{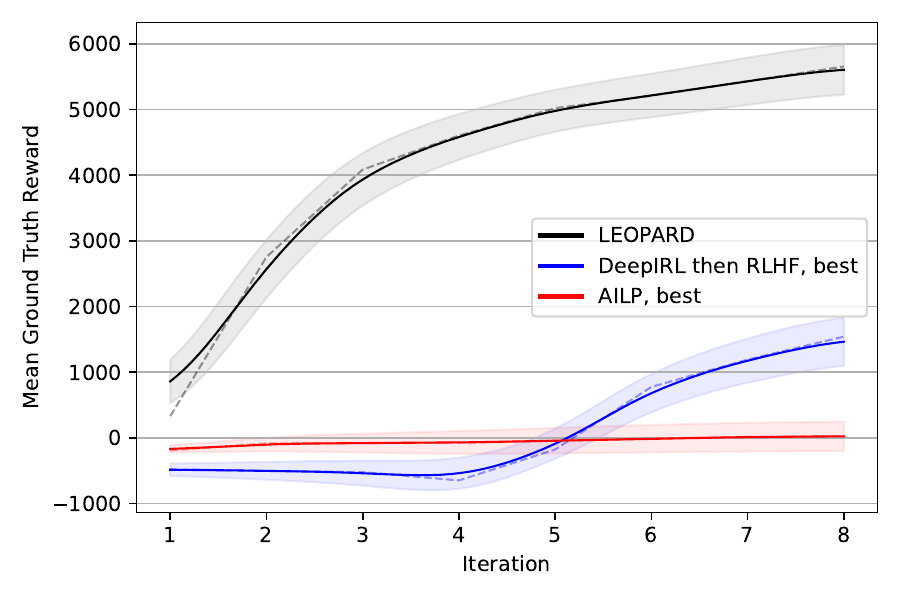}
        \caption{Half Cheetah, $n_\text{demos}=4$, $n_\text{prefs}=256$}
    \end{subfigure}
    \begin{subfigure}[b]{0.4\textwidth}
        \includegraphics[width=\textwidth]{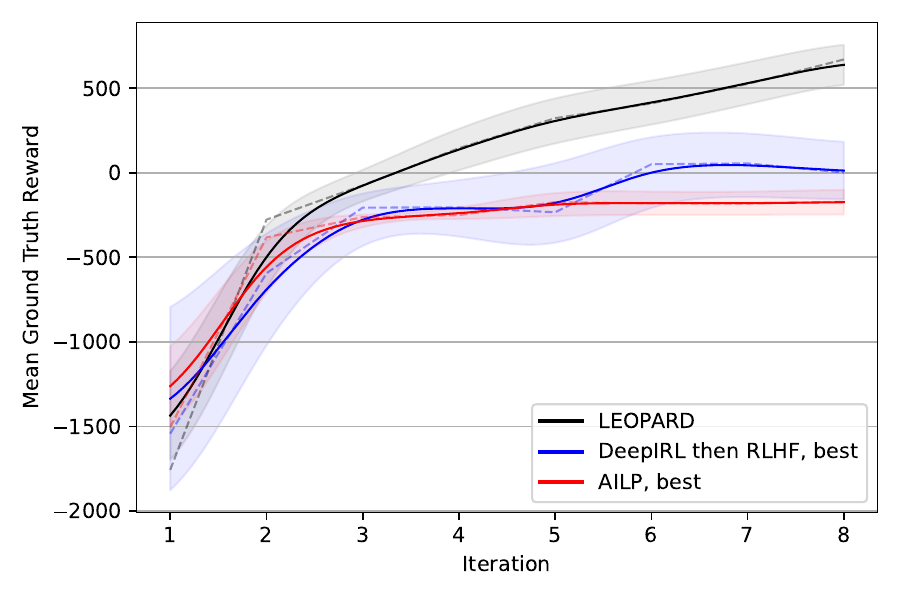}
        \caption{Cliff Walking, $n_\text{demos}=2$, $n_\text{prefs}=64$}
    \end{subfigure}
    \begin{subfigure}[b]{0.4\textwidth}
        \includegraphics[width=\textwidth]{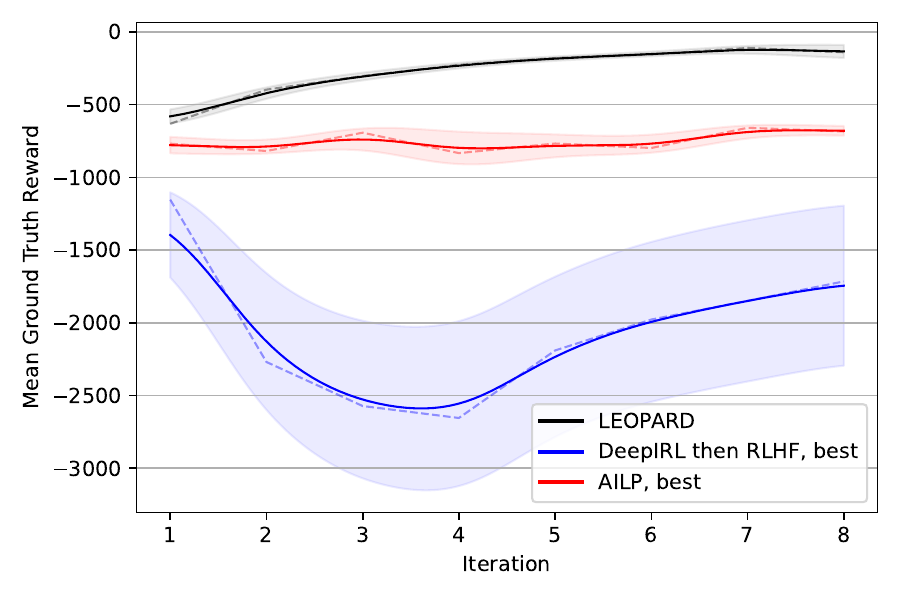}
        \caption{Lunar Lander, $n_\text{demos}=4$, $n_\text{prefs}=256$}
    \end{subfigure}
    \begin{subfigure}[b]{0.4\textwidth}
        \includegraphics[width=\textwidth]{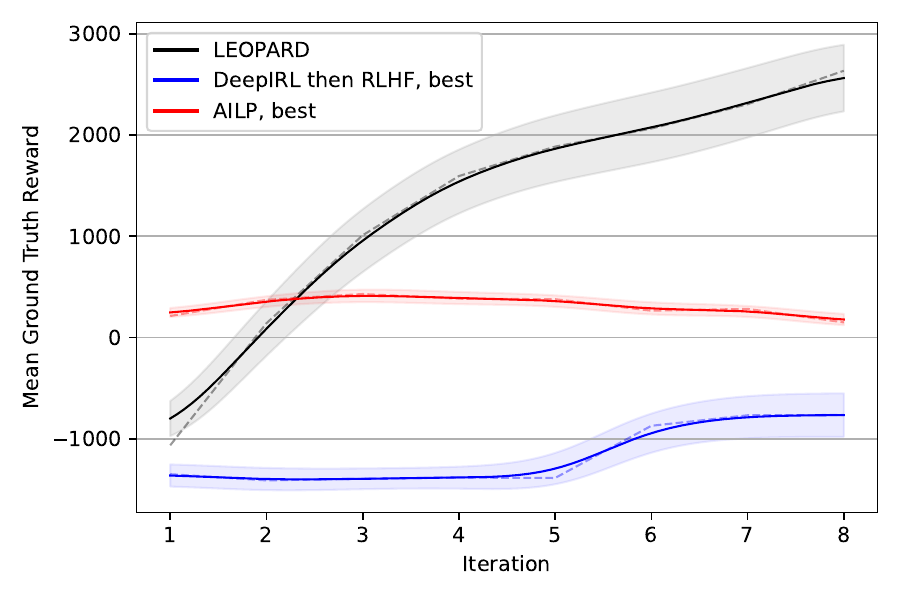}
        \caption{Ant, $n_\text{demos}=4$, $n_\text{prefs}=512$}
    \end{subfigure}
    \caption{Comparison of LEOPARD with the baselines of AILP and DeepIRL followed by RLHF, when positive demonstrations and preferences are available. The lines denote the mean of the ground truth reward function, with shaded standard errors across 16 random seeds, against algorithm iterations---alternations between optimising the reward model and the agent. Solid lines are smoothed means for clarity, dashed lines give raw values. A breakdown of the performance of the baseline methods for different reward model training epochs per iteration is given in \Cref{fig:me_demo_pref_all,fig:ailp_demo_pref_all}.}
    \label{fig:demo_pref_vs_baselines}
\end{figure*}

In \Cref{fig:demo_pref_vs_baselines} we compare LEOPARD against Adversarial Imitation Learning with Preferences (AILP, \citet{taranovic2022adversarial})\footnote{
For our implementation of AILP we only use the relevant loss functions and disregard the extraneous parts of the method, namely initially optimising the policy to maximise visited state entropy and sampling preferences according to maximum entropy.
We use the same RL algorithms as LEOPARD uses, as detailed in \Cref{app:algorithm_details}.
Overall this enables a fair comparison with LEOPARD, and we note that AILP's additional tweaks could be symmetrically applied to LEOPARD if desired.
} and a standard pipeline of training on demonstrations with DeepIRL and then preference finetuning with RLHF.

We see that without exception LEOPARD outperforms both baselines by a considerable margin.
Since LEOPARD can utilise all the data all the time, preferences can be used to aid early exploration, and demonstrations can continue to be trained against even in the latter stages.
Rankings over demonstrations provide an additional information source the baselines are unable to make use of.
Additionally, as it trains the reward model to rough convergence each iteration it allows for adequate learning without over-fitting, and does not require tuning a `reward model training epochs' hyperparameter.

When training the reward model with LEOPARD, we keep training until the loss has loosely converged (see \Cref{app:sc} for details).
This is not possible with DeepIRL as the maximum-entropy `loss' function is not bounded from below, thus the number of training epochs for the reward model is fixed.
We try a variety of values and compare against the best, for a full breakdown see \Cref{fig:me_demo_pref_all}.
For AILP, we try using both our dynamic stopping, as well as various fixed numbers of training epochs.
As before we compare LEOPARD against the best, with \Cref{fig:ailp_demo_pref_all} in \Cref{app:exp} giving a breakdown of the full results.

Whilst not the focus of our algorithm, we additionally show that with only positive demonstrations LEOPARD either beats or performs similarly to the baselines.
This is shown in \Cref{app:exp}, \Cref{fig:demo_vs_baselines}, with the breakdowns of DeepIRL and AILP's results for different numbers of training epochs given in \Cref{fig:me_demo_all,fig:ailp_demo_all} respectively.

\Cref{table:baselines} in \Cref{app:exp} gives a numerical breakdown of final scores for each algorithm in each environment, including the different settings of AILP and DeepIRL.
Note that for the analysis of the Cliff Walking environment, outliers have been removed.
These were due to excessively large negative rewards from walking off the cliff many times before learning this was bad.
A detailed breakdown is given in \Cref{app:exp}, \Cref{table:cw_outliers}.

\subsection{Learning from a Mixture of Feedback Types}\label{sec:results_mix}

\begin{figure*}[h]
    \centering
    \begin{subfigure}[b]{0.4\textwidth}
        \includegraphics[width=\textwidth]{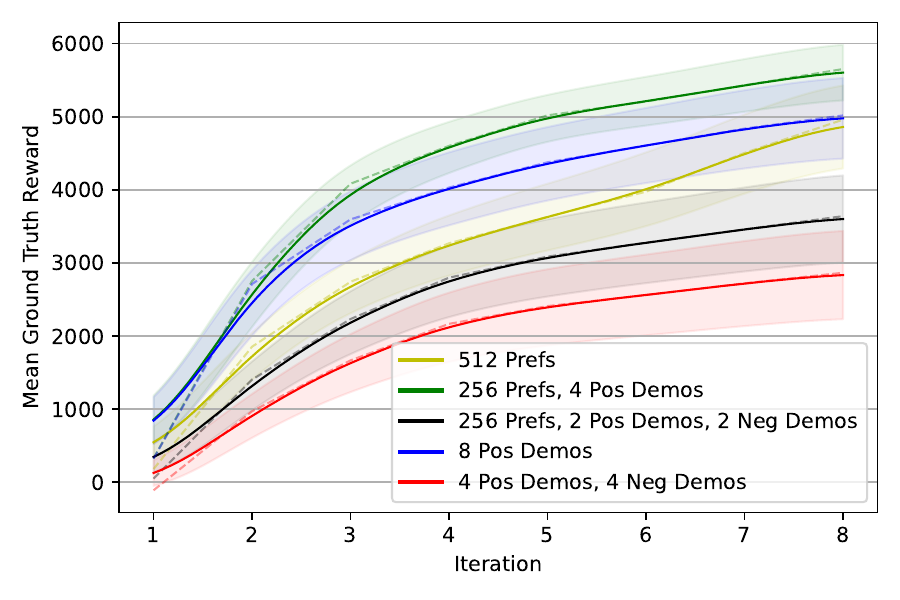}
        \caption{Half Cheetah}
    \end{subfigure}
    \begin{subfigure}[b]{0.4\textwidth}
        \includegraphics[width=\textwidth]{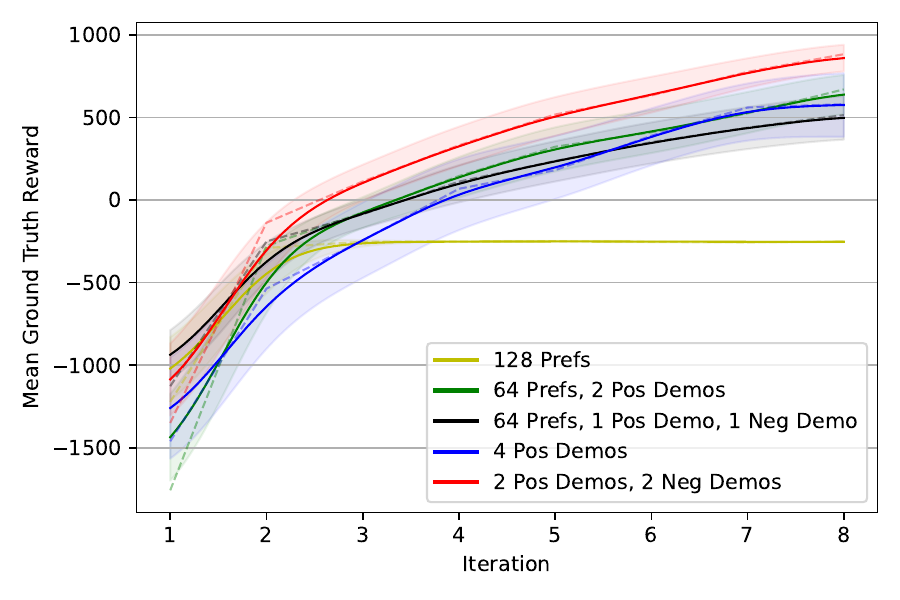}
        \caption{Cliff Walking}
    \end{subfigure}
    \begin{subfigure}[b]{0.4\textwidth}
        \includegraphics[width=\textwidth]{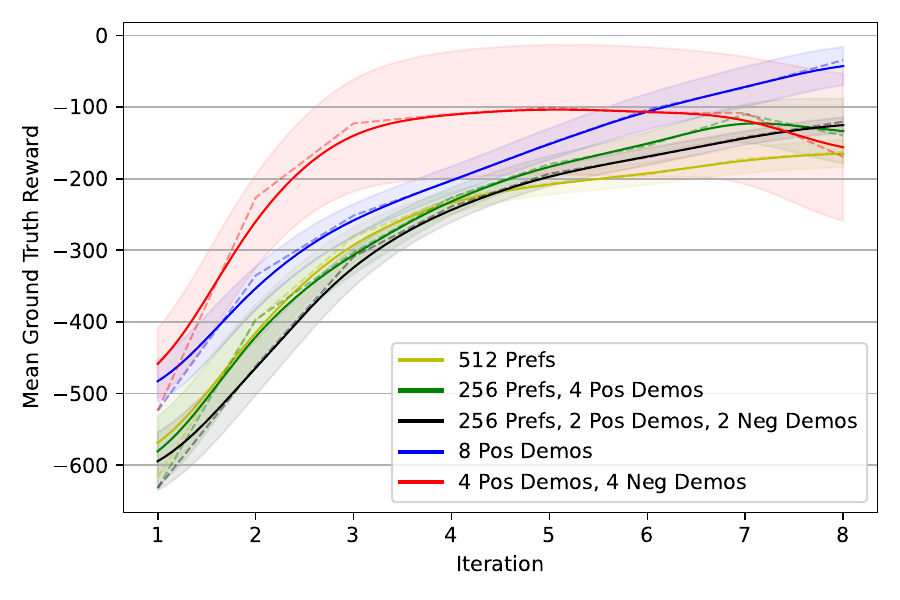}
        \caption{Lunar Lander}
    \end{subfigure}
    \begin{subfigure}[b]{0.4\textwidth}
        \includegraphics[width=\textwidth]{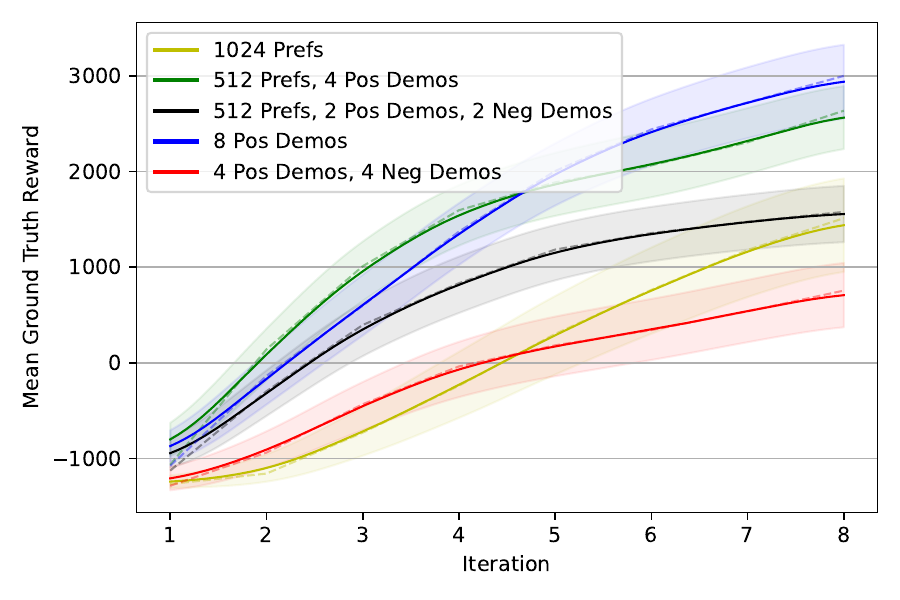}
        \caption{Ant}
    \end{subfigure}
    \caption{Comparison of LEOPARD's performance when varying types of feedback are available. The lines denote the mean of the ground truth reward function, with shaded standard errors across 16 random seeds, against algorithm iterations---alternations between optimising the reward model and the agent. Solid lines are smoothed means for clarity, dashed lines give raw values.}
    \label{fig:mix}
\end{figure*}

When learning from a mixture of feedback types, there's no clear best combination.
These results are shown in \Cref{fig:mix}, with final scores detailed in \Cref{app:exp}, \Cref{table:mix}.
Positive demonstrations give the best performance on Lunar Lander and Ant, while preferences combined with positive demonstrations works best on Half Cheetah.
For Cliff Walking, a full mixture of all feedback types scores highest.
These results are somewhat mixed and noisy, but we note that preferences combined with positive demonstrations does perform consistently well.

\section{Discussion}

\subsection{Generality of RRPO}

Reward-rational preference orderings over observations, the basis of LEOPARD, are a generalisation of the deterministic reward-rational choice framework \citep{jeon2020reward}, but offer several distinct advantages.
Recall that RRC frames the human feedback as a choice over some set, and then maps elements of that set into distributions over trajectories, whereas RRPO maps the human feedback directly into a set of partial orderings.
These two approaches have differing flexibility, and different feedback types might lend themselves more readily to one or the other.
However, as RRPO is explicit in its construction that it operates only over directly-accessible trajectories, it is easier to apply and use in general and does not require additional methods.

Furthermore, RRPO does not assume any particular properties about the space of reward functions, nor the space of trajectories.
Optimal trajectories are typically a small part of a feasible-trajectory manifold, which itself is a small part in trajectory feature space.
Methods which rely on domain-specific properties of these spaces, such as linearity or computable perturbations, inherently limit themselves from being more broadly applied.
For example, \citet{mehta2023unified} leverages inverse kinematics models to interpret demonstration feedback (alongside preferences) in robotics domains.
Whilst effective for this application, it renders the broader method impossible outside of robotics.
RRPO and LEOPARD on the other hand, could be easily applied to environments very different to the ones we have tested on.
For example, they could be straightforwardly used for Large Language Model (LLM) finetuning.

\subsection{Limitations and Future Work}\label{sec:limitations}

Whilst we have tested LEOPARD on a range of environments with differently structured observation and action spaces, a more comprehensive study would investigate an even wider range of tasks, such as more complex robotics, video games, and LLM finetuning.
Furthermore, it would be instructive to more closely interrogate how performance depends on the proportions of different feedback used for learning.

Additionally, there are other methods that seek to learn from both preference and demonstration data, or even negative/failed demonstrations, as detailed in \cref{sec:existing_work_types_of_feedback,sec:existing_work_other_types}.
Whilst these are less general in application than LEOPARD; a comparison of performance would still be interesting.
We have chosen the baselines of AILP and `DeepIRL followed by RLHF' to test against as they have similar simplicity and generality to our own method, as well as the latter being common practice.

We introduce RRPO as a theoretical backdrop for LEOPARD, however our investigation of its properties and encodings for many types of feedback is limited.
Due to its similarity to RRC and the Placket-Luce choice model, we do not see this as a critical failing, as it will inherit many properties from those models, and deterministic RRC formulations can be trivially encoded under RRPO.
Nevertheless, there are likely important theoretical properties and applications of RRPO that are of relevance to reward learning that ought to be investigated.

\section{Conclusion}

We have shown that LEOPARD can perform effective reward inference, learning from many sources of reward information simultaneously.
It is more effective than standard baselines for learning from preferences and demonstrations, and can additionally incorporate more information such as demonstration rankings and negative/failed demonstrations.
We have also investigated how many sources of reward information could be more beneficial than relying on only large amounts of a single type.
The generality and simplicity of our method makes it very powerful and applicable to important current problems such as high dimensional robotics, and LLM finetuning.
Furthermore, it opens the door to exploring the use of a much wider range of feedback in many RL settings.

\FloatBarrier

\newpage
\bibliography{main}
\bibliographystyle{main}

\newpage
\appendix
\crefalias{section}{appendix}
\Crefname{appendix}{Appendix}{Appendices}
\section{Comparison of RRC and RRPO Formalisms}\label{sec:rrc_vs_rrpo}

\begin{table}[htbp]
    \centering
    \caption{
Comparison of RRC and RRPO formalisms for various feedback types.
Throughout, subscript $a$ denotes the agent, $p$ denotes positive demonstrations, $n$ denotes negative demonstrations, and $d$ denotes generic demonstrations.
}
    \begin{adjustbox}{width=1.0\linewidth,center}
        \begin{tabular}{p{2.5cm}ccccc}
            \toprule
            \multirow{2}{*}{Feedback Type}
            & \multicolumn{3}{c}{RRC}
            & \multicolumn{2}{c}{RRPO} \\
            \cmidrule(lr){2-4} \cmidrule(lr){5-6}
            & Choice Set ($\mathcal{C}$)
            & Grounding ($\psi$)
            & Choice
            & Fragments ($\mathcal{D}$)
            & Partial Ordering ($<$) \\
            \midrule
            Preferences
            & $\tau \in \{\tau_i, \tau_j\}$ 
            & $\psi(\tau)=\tau$
            & $\tau_i$
            & $\{\tau_i, \tau_j\}$ 
            & $\{\tau_j < \tau_i\}$ 
            \\
            Positive demos 
            & $\tau \in \mathcal{T}$
            & $\psi(\tau)=\tau$
            & $\tau_d$
            & $\mathcal{D}_a \cup \mathcal{D}_p$
            & $\{\tau_a < \tau_p, | (\tau_a, \tau_p) \in \mathcal{D}_a \times \mathcal{D}_p\}$
            \\
            Negative demos 
            & -
            & -
            & -
            & $\mathcal{D}_n \cup \mathcal{D}_a$
            & $\{\tau_n < \tau_a | (\tau_n, \tau_a) \in \mathcal{D}_n \times \mathcal{D}_a\}$
            \\
            Positive and negative demos 
            & -
            & -
            & -
            & $\mathcal{D}_n \cup \mathcal{D}_a \cup \mathcal{D}_p$
            & $\{\tau_n < \tau_a < \tau_p, | (\tau_n, \tau_a, \tau_p) \in \mathcal{D}_n \times \mathcal{D}_a \times \mathcal{D}_p\}$
            \\
            Ranked demos 
            & -
            & -
            & -
            & $\mathcal{D}_d$
            & $<_d$
            \\
            Corrections 
            & $\Delta q \in Q - Q$
            & $\psi(\Delta q) = \tau_a + A^{-1}\Delta q$
            & $\Delta q*$
            & -
            & -
            \\
            Improvement 
            & $\tau \in \{\tau_\text{improved}, \tau_a\}$
            & $\psi(\tau)=\tau$
            & $\tau_\text{improved}$
            & $\{\tau_\text{improved}, \tau_a\}$
            & $\{\tau_a < \tau_\text{improved}\}$
            \\
            Off 
            & $c \in \{\text{off}, -\}$
            & $\psi(c) = \begin{cases}
                \tau_a
                & c = - \\
                \tau_a^{0:t} \tau_a^t...\tau_a^t
                & c = \text{off}
            \end{cases}$
            & off
            & $\{\tau_a, \tau_a^{0:t} \tau_a^t...\tau_a^t\}$
            & $\{\tau_a < \tau_a^{0:t} \tau_a^t...\tau_a^t\}$
            \\
            Language 
            & $\lambda \in \Lambda$
            & $\psi(\lambda) = \text{Unif}(G(\lambda))$
            & $\lambda^*$
            & -
            & -
            \\
            Proxy rewards 
            & $\tilde r \in \tilde R$
            & $\psi(\tilde r) = \pi(\tau | \tilde r)$
            & $\tilde r^*$
            & $\mathcal{D}_a$
            & $\{\tau_i < \tau_j | (\tau_i, \tau_j) \in \mathcal{D}_a^2, \tilde r^*(\tau_i) < \tilde r^* (\tau_j)\}$
            \\
            Reward and punishment 
            & $c \in \{+1, -1\}$
            & $\psi(c) = \begin{cases}
                \tau_a
                & c = +1 \\
                \tau_\text{expected}
                & c = -1
            \end{cases}$
            & $+1$
            & $\mathcal{D}_a$
            & $\{\tau_i < \tau_j | (\tau_i, \tau_j) \in \mathcal{D}_a^2, c(\tau_j)=+1 \land c(\tau_i)=-1\}$
            \\
            Initial state 
            & $s \in S$
            & $\psi(s) = \text{Unif}({\tau_H^{-T:0}|\tau_H^0=s})$
            & $s^*$
            & -
            & -
            \\
            Credit assignment 
            & $\tau \in \{\tau_a^{i:i+k} | 0 \le i \le T\}$
            & $\psi(\tau)=\tau$
            & $\tau^*$
            & $\{\tau_a^{i:i+k} | 0 \le i \le T\}$
            & $\{\tau < \tau^* | \tau \in \{\tau_a^{i:i+k} | 0 \le i \le T\}\}$
            \\
            \bottomrule
        \end{tabular}
    \end{adjustbox}
    \label{tab:rrc-rrpo-comparison}
\end{table}

\Cref{tab:rrc-rrpo-comparison} summarizes the comparison between RRC and RRPO formalisms for various feedback types.
RRC formalisms are taken from Table 1 and 2 in \citet{jeon2020reward}.
For credit assignment and feedback types that express a binary choice, the formalisms are equivalent.
Note that apart from the feedback types used by LEOPARD,\footnote{
    Preferences, positive demonstrations, negative demonstrations, and ranked demonstrations.
} the provided RRPO formalisms are speculative and not yet empirically validated.
For multiple types of feedback, the probability functions of RRC feedback types are multiplied, whereas in RRPO a union is taken over the fragments and the partial orderings are collected into the set $\mathcal{C}$.
By treating the partial orderings separately, issues with intransitivity between different feedback types can be avoided.

\section{Algorithm Details}\label{app:algorithm_details}

The full algorithm for LEOPARD is given in \cref{alg:main}.
Initialisations follow standard neural network initialisation methods.
RandomRollouts generates trajectories by sampling random actions and resetting the environment when necessary.
TrainAgent uses the standard SAC algorithm for when the action space is continuous, and PPO when it's discrete.
For both algorithms we use the implementations provided by Stable Baselines3 \citep{stable-baselines3}.
It uses the learnt reward function to generate rewards for the RL procedure.
Hyperparameters used for SAC and PPO are those given in RL Baselines3 Zoo \citep{rl-zoo3}, except for Lunar Lander where we use an entropy bonus of 0.05 instead of 0.
Details on TrainRewardModel and GetPreferences are given in \cref{app:rm_train,app:get_prefs} respectively.
The generation of the demonstrations and their rankings is detailed in \cref{app:demos}.

\begin{algorithm}[h]
    \caption{LEOPARD}
    \label{alg:main}
    \begin{algorithmic}
        \STATE \textbf{Input:}
        \STATE ~~ $n_{\text{iters}}$ \tab \text{Number of iterations to perform}
        \STATE ~~ $n_{\text{rollout-steps}}$ \tab \text{Number of environment rollout steps}
        \STATE ~~ $n_{\text{prefs}}$ \tab \text{Number of preferences to sample}
        \STATE ~~ $\mathcal{D}_{\text{pos}}$ \tab \text{Positive demonstrations}
        \STATE ~~ $<_{\text{pos}}$ \tab \text{Positive demonstrations partial ordering}
        \STATE ~~ $\mathcal{D}_{\text{neg}}$ \tab \text{Negative demonstrations}
        \STATE ~~ $<_{\text{neg}}$ \tab \text{Negative demonstrations partial ordering}

        \STATE ~

        \STATE \textbf{Output:}
        \STATE ~~ $\pi$ \tab \text{Trained agent policy}
        \STATE ~~ $R_\theta$ \tab \text{Learnt reward function}

        \STATE ~

        \STATE $n_{\text{rollout-steps-per-iter}} \gets \lfloor n_{\text{rollout-steps}} / (n_{\text{iters}} + 1) \rfloor$
        \STATE $n_{\text{prefs-per-iter}} \gets \lfloor n_{\text{prefs}} / n_{\text{iters}} \rfloor$
        \STATE $\mathcal{D}_{\text{agent}} \gets \emptyset$ \COMMENT{Agent trajectory pool}
        \STATE $\mathcal{P} \gets \emptyset$ \COMMENT{Preferences dataset}
        \STATE $\pi \gets$ InitialiseAgent$()$
        \STATE $R_\theta \gets$ InitialiseRewardFunction$()$
        \STATE $\mathcal{D}_{\text{new-trajectories}} \gets$ RandomRollouts$(n_{\text{rollout-steps-per-iter}})$

        \STATE ~

        \FOR{$i = 1$ \TO $n_{\text{iters}}$}
            \STATE $\mathcal{P} \gets \mathcal{P} \cup \text{GetPreferences}(n_{\text{prefs-per-iter}}, \mathcal{D}_{\text{new-trajectories}}, \mathcal{D}_{\text{agent}})$
            \STATE $\mathcal{D}_{\text{agent}} \gets \mathcal{D}_{\text{agent}} \cup \mathcal{D}_{\text{new-trajectories}}$
            \STATE $R_\theta \gets$ TrainRewardModel$(R_\theta, \mathcal{D}_{\text{pos}}, <_{\text{pos}}, \mathcal{D}_{\text{neg}}, <_{\text{neg}}, \mathcal{D}_{\text{agent}}, \mathcal{P})$
            \STATE $\pi, \mathcal{D}_{\text{new-trajectories}} \gets$ TrainAgent$(\pi, R_\theta, n_{\text{rollout-steps-per-iter}})$
        \ENDFOR
    \end{algorithmic}
\end{algorithm}

\subsection{Reward Model Training}\label{app:rm_train}

The reward model is trained by optimising the loss function \cref{eq:loss} with the AdamW optimiser.
Batches of $\mathcal{D}_{\text{pos}}, \mathcal{D}_{\text{neg}}, \mathcal{D}_{\text{agent}}$, and $\mathcal{P}$ are sampled independently, and then encoded via \cref{eq:c,eq:d}.
Since we want to respect the relative proportions of each data source\footnote{
E.g. if we had 1000 preferences and 1 demonstration, we'd probably care more about low average loss from the preferences than from the demonstration.
} but also have independent batch sizes, normalisation of the loss across the batch is slightly involved.
This is detailed in \cref{app:loss_norm}.
Instead of training for a fixed number of steps / epochs, training steps are taken until a stopping condition is reached, as detailed in \cref{app:sc}.
Together these procedures could result in varying coverages for each data source, from potentially many epochs on one,\footnote{Since our data sources are of varying sizes and not partitioned into equal numbers of batches, the notion of a training epoch - one complete pass over all training data - is not well-defined. We do however have notions of data source specific epochs.} to only sampling a small fraction of another.

\subsubsection{Loss Normalisation Across Batch}\label{app:loss_norm}

As we want our gradient steps to be roughly unity in magnitude and independent of the batch size, we need to normalise it.
Typically, this is very easy in supervised learning---one can simply take an average across the batch---but this is not the case for \cref{eq:loss}.
Expansion of the gradient of the loss with respect to $\theta$, and noting our reward function operates at the level of transitions within trajectories, reveals the normalising factor of each data source (note this assumes a fixed length of fragments for each partial ordering):
\begin{align*}
    \sum_{(\tau_i, <_j) \in \mathcal{D} \times \mathcal{C}} \text{Length}(\tau_i)
    \cdot
    \mathbf{1}_{\exists \tau_k \in \mathcal{D} . \tau_k \neq \tau_i \land \tau_k <_j \tau_i}.
\end{align*}
The loss term of each data source is first divided by this factor evaluated on the batch---so that they are all at most unity in magnitude---and then combined in a weighted sum where the weights are the factors evaluated on the whole dataset for that source divided by the sum of these dataset-level factors.
Some data sources, namely $\mathcal{D}_{\text{agent}}$, are treated as `in-excess', and their dataset-level factor is made proportional to another data source, e.g. $\mathcal{D}_{\text{pos}}$.

\subsubsection{Stopping Conditions}\label{app:sc}

Generally, the reward function loss from poorly-fitted demonstration rankings are much higher than poorly fitted preferences.
This is because trajectories are typically longer than trajectory-fragments and demonstrations generate more `$<$' comparisons than a preference.
However, the distribution of demonstrations are typically quite far from that of the agent trajectories, which the preferences have been generated over.
This makes it much easier for the reward function to separate the demonstrations from agent behaviour and thus achieve a low loss on the demonstration ordering, than it does for it to get low loss on all the preference orderings.

The consequence of the above two facts is that if we were training on just the demonstrations, we'd want to do at most a few epochs (to learn fast and avoid overfitting), but if we were training on just the preferences we might want to do more (as learning is slower and overfitting less of a potential issue).
Thus, as the amount of data in each dataset varies in each iteration, it does not make sense to have a pre-specified number of training steps, and instead a stopping condition should be used.

Our stopping condition simply checks if the training loss has loosely converged.
At each step we check if the change in training loss is less than 10\% of the last step's training loss.
If this occurs 3 times in a row, we stop training the reward model for that iteration, and return to agent training.
There is a hard limit of 256 epochs on the smallest data source, though this is rarely reached.
Empirically this strikes the balance between learning the most from the small amount of data, and avoiding overfitting.

\subsubsection{Smoothness Loss}\label{app:smoothness_loss}

In addition to our negative log-likelihood loss term for optimising RRPO, we also have a loss term based on the smoothness of the reward function over trajectories, as seen in \cref{eq:loss}.
This is defined as the mean-squared first derivative in reward with respect to environment step for all full trajectories.\footnote{I.e. not fragments used for preferences.}
Concretely:
\begin{align}
    \mathcal{L}_\text{Smooth}(\mathcal{D}, \theta)
    =&~ %
    \frac{1}{|\mathcal{D}_\text{Full}|}
    \sum_{\tau_i^{(n)} \in \mathcal{D}_\text{Full}}
    \frac{1}{n-1} \sum_{k=1}^{n-1}
    (R_\theta(s_{k-1}, a_{k-1}, s_k) - R_\theta(s_k, a_k, s_{k+1}))^2, \\
    \mathcal{D}_\text{Full}
    =&~ \{\tau_i | \tau_i \in \mathcal{D}, \forall \tau_{j \neq i} \in \mathcal{D}.~ \tau_i \not\subset \tau_j \}, \\
    \tau_i^{(n)}
    =&~ \{(s_0, a_0, s_1), ..., (s_{n-1}, a_{n-1}, s_n)\}.
\end{align}

\subsection{Synthetic Feedback}\label{app:feedback}

\subsubsection{Preferences}\label{app:get_prefs}

In \cref{alg:main}, the GetPreferences function randomly samples trajectory fragments for comparison, with a bias to sampling from new trajectories.
We are using a synthetic oracle which uses the ground truth reward function to noisily generate preferences, simulating the imperfect human rationality.
More specifically, for each sampled pair of fragments, the sigmoid of their reward difference is used as the parameter for a Bernoulli random variable which is then sampled to generate the preference.

\subsubsection{Demonstrations}\label{app:demos}

To create demonstrations for our tasks, we simply train an agent on the ground truth reward function (or its negation in the case of negative demonstrations).
Several agents are trained, and the best few, $n_\text{selected}$, are picked.
From these agents, we create a list of their trajectories, ordering from their latest attempts to their first, and interleaving each agent together with the best agent first.
For training an agent from feedback, if $n$ demonstrations are being used, the first $n$ demonstrations from this list are provided.
Rankings are generated automatically based on the ground truth reward of each demonstration, making $<_{\text{pos}}$ and $<_{\text{neg}}$ total orders.\footnote{They are not required to be total orders to apply the general method.}
The ground truth reward per agent step and number selected, $n_\text{selected}$, of all demonstrations trained are given in \Cref{fig:pos_demos,fig:neg_demos} for positive and negative demonstrations respectively.

\begin{figure}[h]
    \centering
    \begin{subfigure}[b]{0.35\textwidth}
        \includegraphics[width=\textwidth]{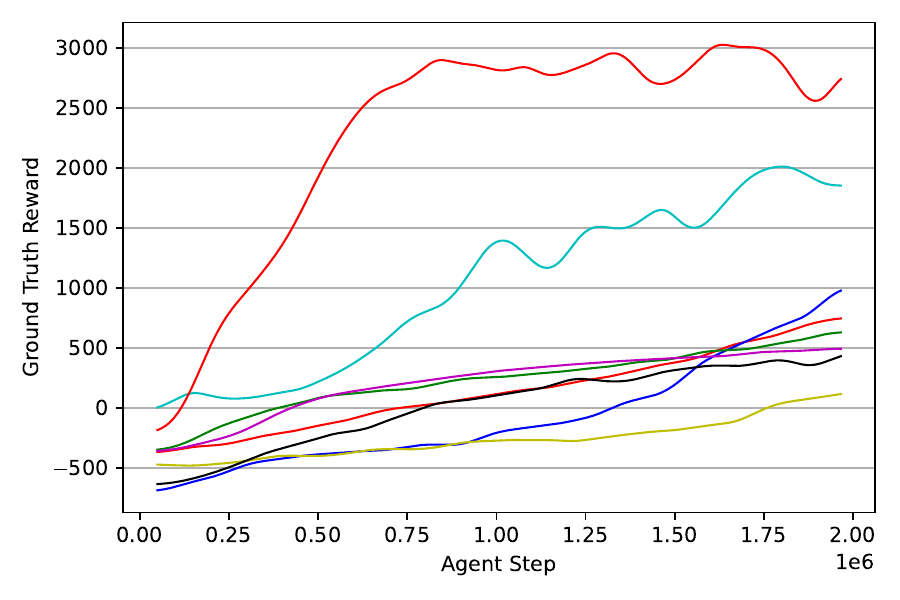}
        \caption{Half Cheetah, $n_\text{selected}=4$}
    \end{subfigure}
    \begin{subfigure}[b]{0.35\textwidth}
        \includegraphics[width=\textwidth]{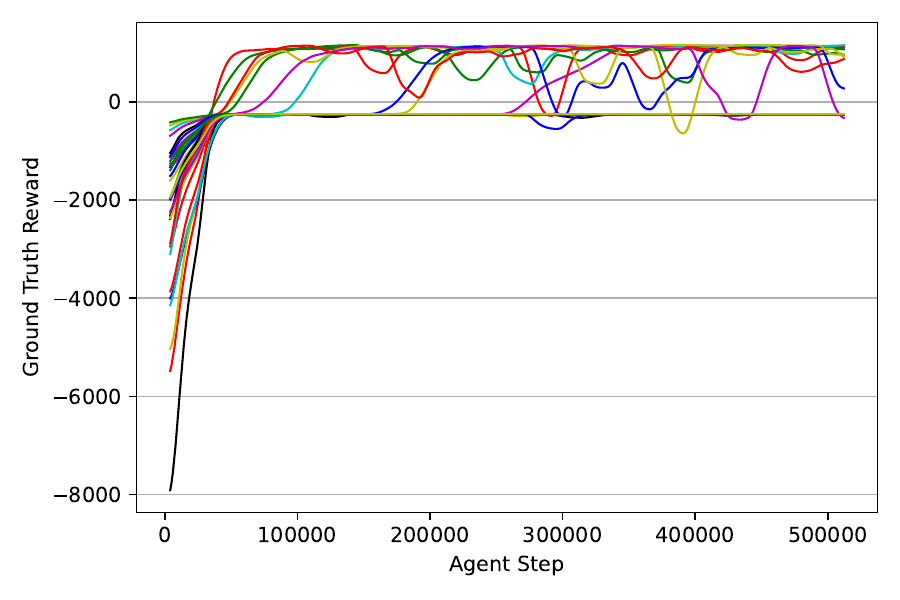}
        \caption{Cliff Walking, $n_\text{selected}=4$}
    \end{subfigure}
    \begin{subfigure}[b]{0.35\textwidth}
        \includegraphics[width=\textwidth]{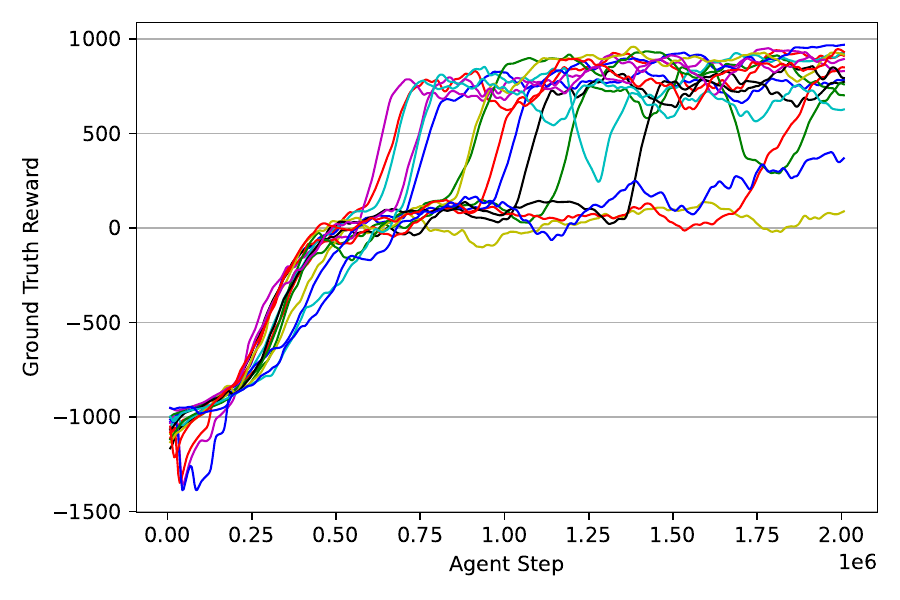}
        \caption{Lunar Lander, $n_\text{selected}=8$}
    \end{subfigure}
    \begin{subfigure}[b]{0.35\textwidth}
        \includegraphics[width=\textwidth]{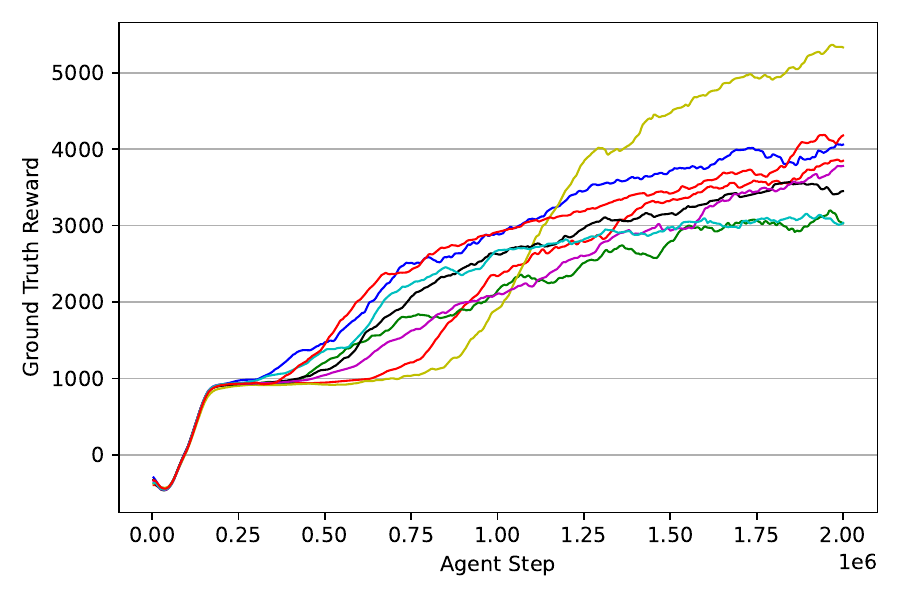}
        \caption{Ant, $n_\text{selected}=8$}
    \end{subfigure}
    \caption{Ground truth reward vs agent steps for the positive demonstrations that were trained in every environment. We also state how many were selected as good examples to be used for demonstration learning.}
    \label{fig:pos_demos}
\end{figure}

\begin{figure}[h]
    \centering
    \begin{subfigure}[b]{0.35\textwidth}
        \includegraphics[width=\textwidth]{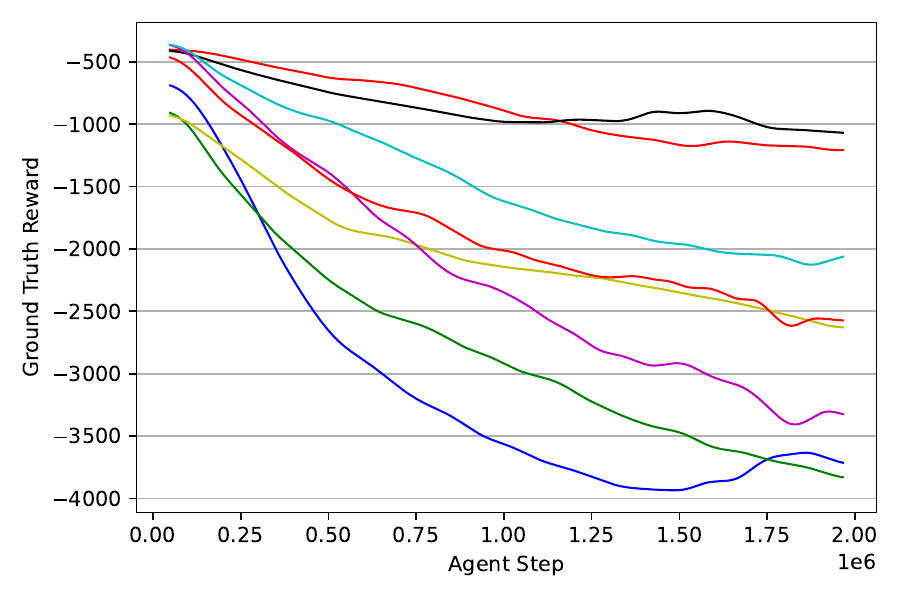}
        \caption{Half Cheetah, $n_\text{selected}=8$}
    \end{subfigure}
    \begin{subfigure}[b]{0.35\textwidth}
        \includegraphics[width=\textwidth]{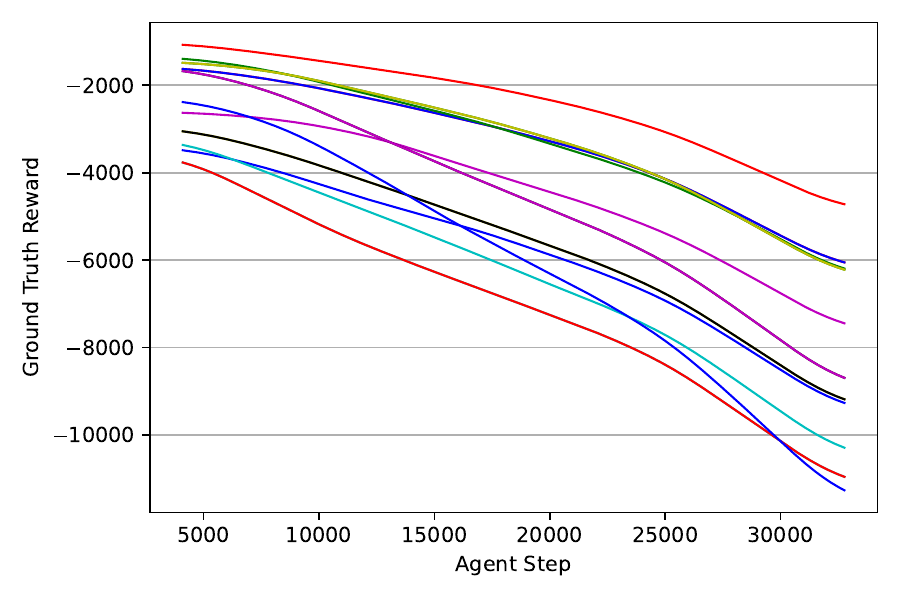}
        \caption{Cliff Walking, $n_\text{selected}=8$}
    \end{subfigure}
    \begin{subfigure}[b]{0.35\textwidth}
        \includegraphics[width=\textwidth]{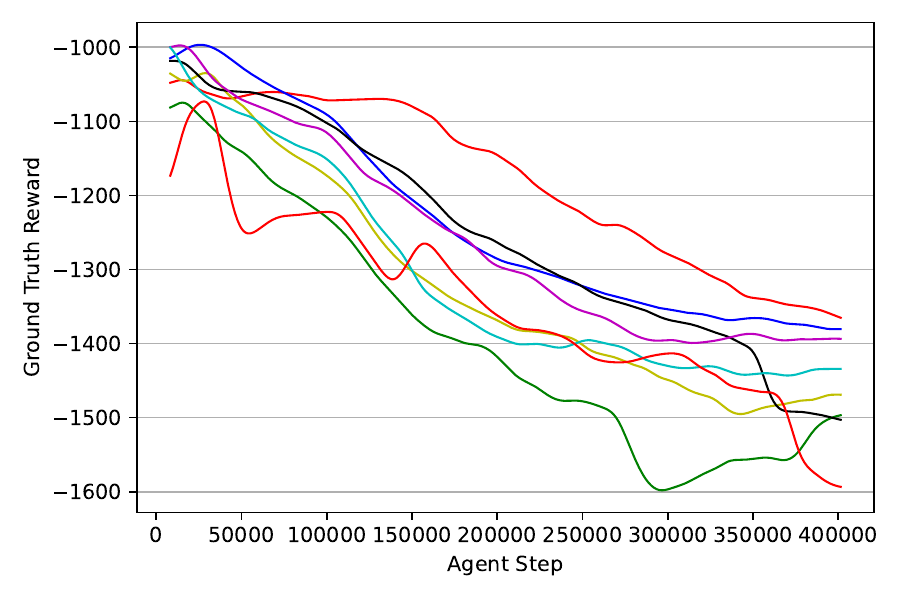}
        \caption{Lunar Lander, $n_\text{selected}=8$}
    \end{subfigure}
    \begin{subfigure}[b]{0.35\textwidth}
        \includegraphics[width=\textwidth]{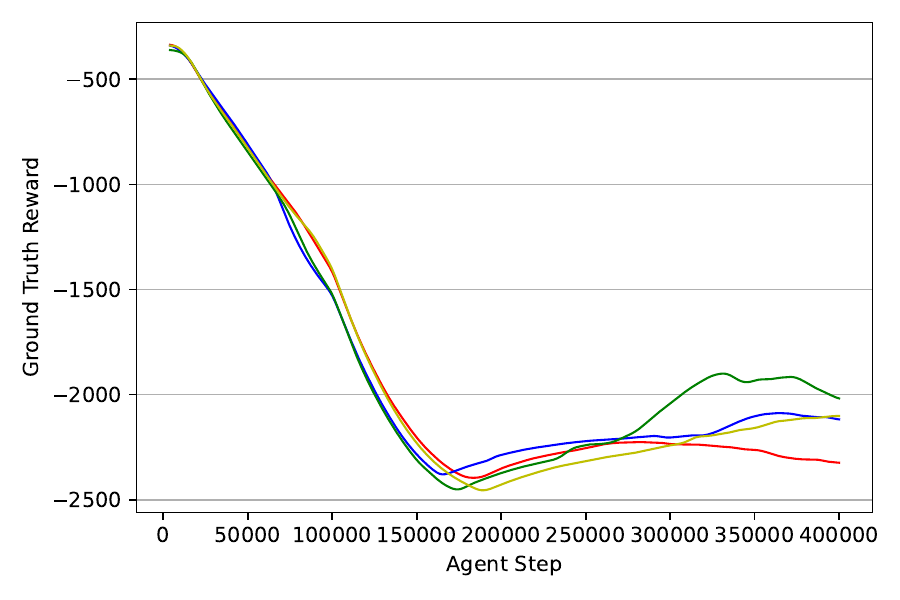}
        \caption{Ant, $n_\text{selected}=4$}
    \end{subfigure}
    \caption{Ground truth reward vs agent steps for the negative demonstrations that were trained in every environment. We also state how many were selected as bad examples to be used for demonstration learning.}
    \label{fig:neg_demos}
\end{figure}

\FloatBarrier

\section{Experiment and Environment Details}\label{app:env_details}

Here we give details on versions / modifications made for each environment, as well as environment-specific hyperparameters summarised in \cref{table:hparams}.
We used $n_\text{iters}=8$ and 16 random seeds for all runs.

\begin{table}[h]
  \centering
  \caption{Environment specific hyperparameters. `Trajectory Length' refers to the fixed time horizon for that environment, `Preference Fragment Length' is the length of the contiguous trajectory subsequences that are used to generate preferences. Both are measured in environment timesteps.}
  \begin{tabular}{l c c c}
    \toprule
    Environment & Trajectory Length & Preference Fragment Length & $n_\text{rollout-steps}$ \\
    \midrule
    Half Cheetah
    & 1k  & 32 & 2M   \\
    Cliff Walking
    & 250 & 16 & 256k \\
    Lunar Lander
    & 250 & 32 & 8M   \\
    Ant
    & 1k  & 32 & 4M   \\
    \bottomrule
  \end{tabular}
  \label{table:hparams}
\end{table}

\subsection{Half Cheetah}

The v4 version is used out-of-the-box.

\subsection{Cliff Walking}

The v0 version is modified to have a fixed horizon of 250 timesteps and a custom reward function.
The standard version has a reward of -1 every timestep with the episode terminating when the end is reached.
Walking off the cliff gives -100 reward and returns the agent to the start.
Our fixed horizon version of this is the same except reaching the end state does not terminate the environment, and instead grants 5 reward per timestep spent there.
This was based on what lead to good learning with PPO and access to the reward function directly.

As the reward function is sparse, for sampling preferences only, a shaped version of it is used to simulate human intuition on what behaviours are closer to optimal.
The penalty for walking off cliffs remains the same, but otherwise the agent receives a weighted reward of -1 and 5 depending on how close in $L_1$ norm it is to the start/end state respectively.

\subsection{Lunar Lander}

The v2 version is modified to have a fixed horizon of 250 timesteps and a custom reward function.

The reward function used is mostly the same as in the Gymnasium version, except instead of terminating on game over or the lander not being awake (i.e. landed), a -1 or +1 reward is issued each timestep respectively.

\subsection{Ant}

V4 version with \verb|terminate_when_unhealthy=False| so that there are more maximum length trajectories.

\section{Supplementary Results}\label{app:exp}

\begin{figure*}[h]
    \centering
    \begin{subfigure}[b]{0.35\textwidth}
        \includegraphics[width=\textwidth]{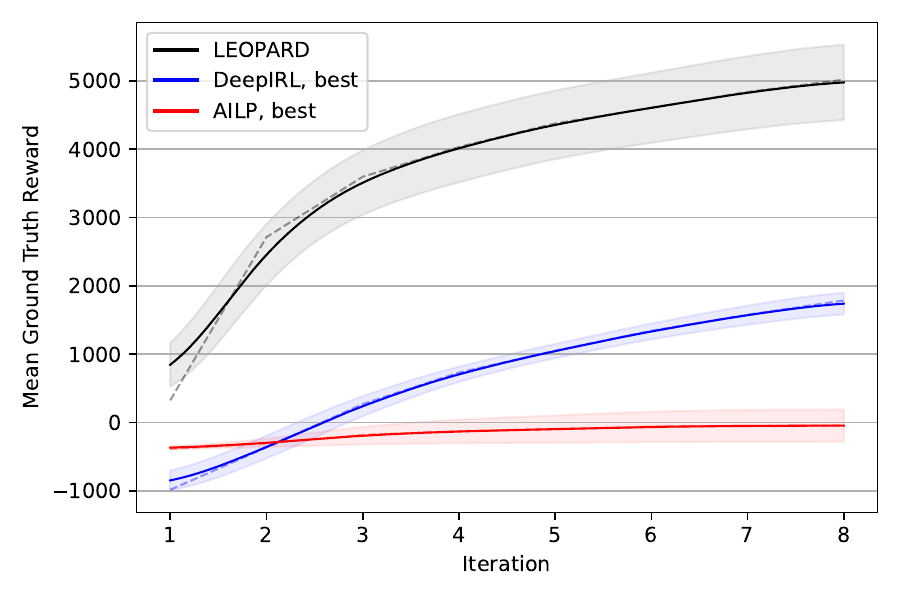}
        \caption{Half Cheetah, $n_\text{demos}=8$}
    \end{subfigure}
    \begin{subfigure}[b]{0.35\textwidth}
        \includegraphics[width=\textwidth]{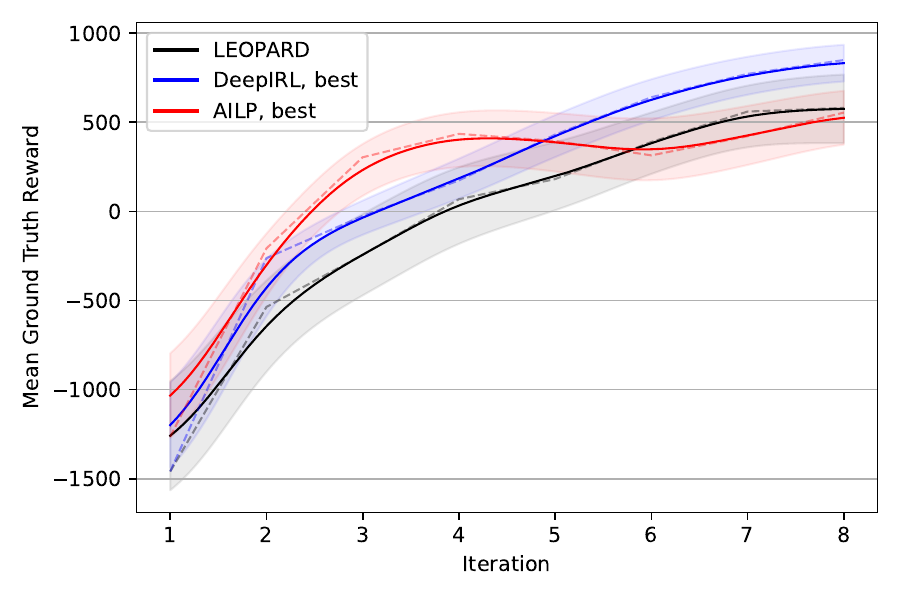}
        \caption{Cliff Walking, $n_\text{demos}=4$}
    \end{subfigure}
    \begin{subfigure}[b]{0.35\textwidth}
        \includegraphics[width=\textwidth]{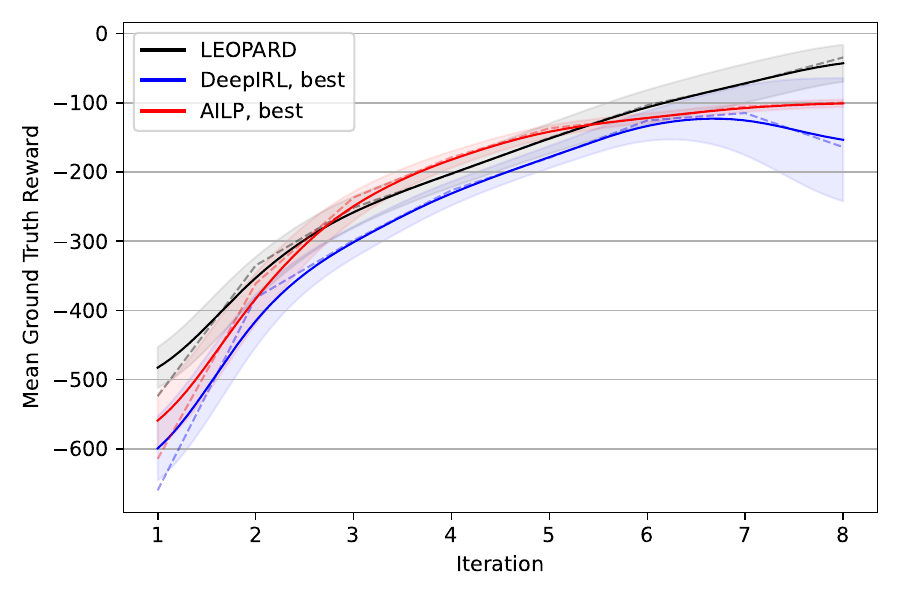}
        \caption{Lunar Lander, $n_\text{demos}=8$}
    \end{subfigure}
    \begin{subfigure}[b]{0.35\textwidth}
        \includegraphics[width=\textwidth]{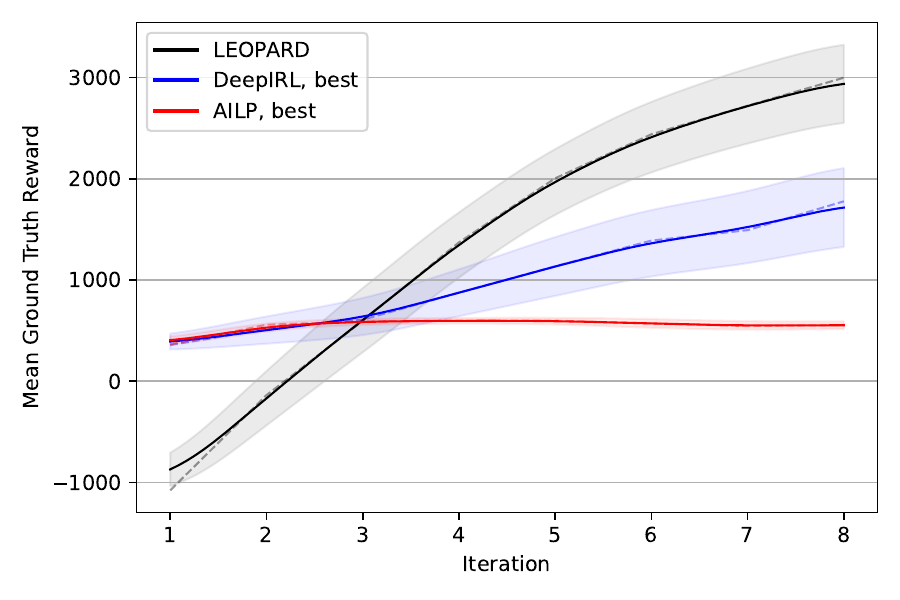}
        \caption{Ant, $n_\text{demos}=8$}
    \end{subfigure}
    \caption{Comparison of LEOPARD with the baselines of AILP and DeepIRL when only positive demonstrations are available. The lines denote the mean of the ground truth reward function, with shaded standard errors across 16 random seeds, against algorithm iterations---alternations between optimising the reward model and the agent. Solid lines are smoothed means for clarity, dashed lines give raw values. A breakdown of the performance of the baseline methods for different reward model training epochs per iteration is given in \Cref{fig:me_demo_all,fig:ailp_demo_all}.}
    \label{fig:demo_vs_baselines}
\end{figure*}

\begin{table}[h]
  \centering
  \caption{Final ground truth reward to 3 s.f. with standard error for LEOPARD against a variety of baselines. (Top) 50/50 mix of preferences and positive demonstrations with baselines of AILP, performing DeepIRL followed by RLHF, and performing RLHF followed by DeepIRL (Half Cheetah only). See \Cref{fig:demo_pref_vs_baselines} for reward vs algorithm iteration. (Bottom) Only positive demonstrations with baselines of AILP and DeepIRL. See \Cref{fig:demo_vs_baselines} for reward vs algorithm iteration. `RM epochs per iter' is the number of training epochs for the reward model on each iteration of the algorithm, required to be fixed for DeepIRL. \textbf{Best} in column for section.}
  \begin{tabular}{l c c c c c}
    \toprule
    Method & RM epochs & \multicolumn{4}{c}{Final Ground Truth Reward \stderr{std error}} \\
           & per iter & Half Cheetah & Cliff Walking & Lunar Lander & Ant\\
    \midrule
    \midrule
    LEOPARD (ours) & Dynamic
    & \textbf{5650} \stderr{386}
    & \textbf{670} \stderr{116}
    & \textbf{-140} \stderr{49.8}
    & \textbf{2630} \stderr{322}
    \\
    AILP & Dynamic
    & 3.49 \stderr{105}
    & -249 \stderr{6.09}
    & -684 \stderr{31.8}
    & -1130 \stderr{142}
    \\
    AILP & 1
    & 14.1 \stderr{234}
    & -266 \stderr{116}
    & -2010 \stderr{506}
    & -237 \stderr{110}
    \\
    AILP & 2
    & 25.1 \stderr{226}
    & -172 \stderr{74.2}
    & -2270 \stderr{507}
    & -300 \stderr{117}
    \\
    AILP & 4
    & -129 \stderr{35.9}
    & -181 \stderr{85.5}
    & -1930 \stderr{501}
    & 150 \stderr{131}
    \\
    AILP & 8
    & -87.0 \stderr{38.4}
    & -180 \stderr{70.0}
    & -813 \stderr{340}
    & 148 \stderr{55.0}
    \\
    DeepIRL then RLHF & 1
    & -389 \stderr{223}
    & -46.8 \stderr{125}
    & -2340 \stderr{548}
    & -766 \stderr{216}
    \\
    DeepIRL then RLHF & 2
    & 189 \stderr{312}
    & 1.34 \stderr{163}
    & -2200 \stderr{537}
    & -803 \stderr{259}
    \\
    DeepIRL then RLHF & 4
    & 224 \stderr{205}
    & -61.7 \stderr{115}
    & -2000 \stderr{467}
    & -792 \stderr{221}
    \\
    DeepIRL then RLHF & 8
    & 1540 \stderr{374}
    & -91.7 \stderr{103}
    & -1720 \stderr{548}
    & -927 \stderr{192}
    \\
    \midrule
    \midrule
    LEOPARD (ours) & Dynamic
    & \textbf{5020} \stderr{555}
    & 580 \stderr{199}
    & \textbf{-34.4} \stderr{25.7}
    & \textbf{3000} \stderr{390}
    \\
    AILP & Dynamic
    & -45.0 \stderr{236}
    & 554 \stderr{146}
    & -215 \stderr{16.1}
    & -489 \stderr{178}
    \\
    AILP & 1
    & -88.3 \stderr{9.15}
    & 381 \stderr{131}
    & -99.5 \stderr{5.45}
    & 555 \stderr{37.1}
    \\
    AILP & 2
    & -61.5 \stderr{47.1}
    & 330 \stderr{156}
    & -131 \stderr{9.33}
    & 450 \stderr{54.8}
    \\
    AILP & 4
    & -118 \stderr{6.08}
    & 205 \stderr{133}
    & -180 \stderr{12.3}
    & 300 \stderr{79.1}
    \\
    AILP & 8
    & -96.2 \stderr{6.36}
    & -72.2 \stderr{93.2}
    & -214 \stderr{8.62}
    & 268 \stderr{59.4}
    \\
    DeepIRL & 1
    & 1470 \stderr{318}
    & 828 \stderr{92.2}
    & -575 \stderr{194}
    & -295 \stderr{230}
    \\
    DeepIRL & 2
    & 1610 \stderr{264}
    & 769 \stderr{111}
    & -164 \stderr{98.6}
    & 1320 \stderr{426}
    \\
    DeepIRL & 4
    & 1290 \stderr{216}
    & \textbf{849} \stderr{102}
    & -159 \stderr{18.0}
    & 1780 \stderr{399}
    \\
    DeepIRL & 8
    & 1790 \stderr{162}
    & 528 \stderr{105}
    & -219 \stderr{21.3}
    & 1340 \stderr{319}
    \\
    \bottomrule
  \end{tabular}
  \label{table:baselines}
\end{table}

\begin{table}[h]
  \centering
  \caption{Final ground truth reward with standard error for LEOPARD across a variety of mixture of types of feedback. For details on feedback amounts per environment and the reward vs algorithm iteration see \Cref{fig:mix}. \textbf{Best} in column.}
  \begin{tabular}{l c c c c}
    \toprule
    Feedback types & \multicolumn{4}{c}{Final Ground Truth Reward \stderr{std error}} \\
    & Half Cheetah & Cliff Walking & Lunar Lander & Ant\\
    \midrule
    Preferences
    & 4960 \stderr{574}
    & -252 \stderr{2.22}
    & -163 \stderr{19.7}
    & 1510 \stderr{491}
    \\
    Positive demonstrations
    & 5020 \stderr{555}
    & 580 \stderr{199}
    & \textbf{-34.4} \stderr{25.7}
    & \textbf{3000} \stderr{390}
    \\
    Preferences and positive demos
    & \textbf{5650} \stderr{386}
    & 670 \stderr{116}
    & -140 \stderr{49.8}
    & 2630 \stderr{322}
    \\
    Positive and negative demos
    & 2870 \stderr{609}
    & \textbf{883} \stderr{79.0}
    & -169 \stderr{107}
    & 754 \stderr{339}
    \\
    Prefs, pos and neg demos
    & 3640 \stderr{603}
    & 514 \stderr{133}
    & -120 \stderr{11.3}
    & 1580 \stderr{296}
    \\
    \bottomrule
  \end{tabular}
  \label{table:mix}
\end{table}

\begin{figure}[h]
    \centering
    \begin{subfigure}[b]{0.35\textwidth}
        \includegraphics[width=\textwidth]{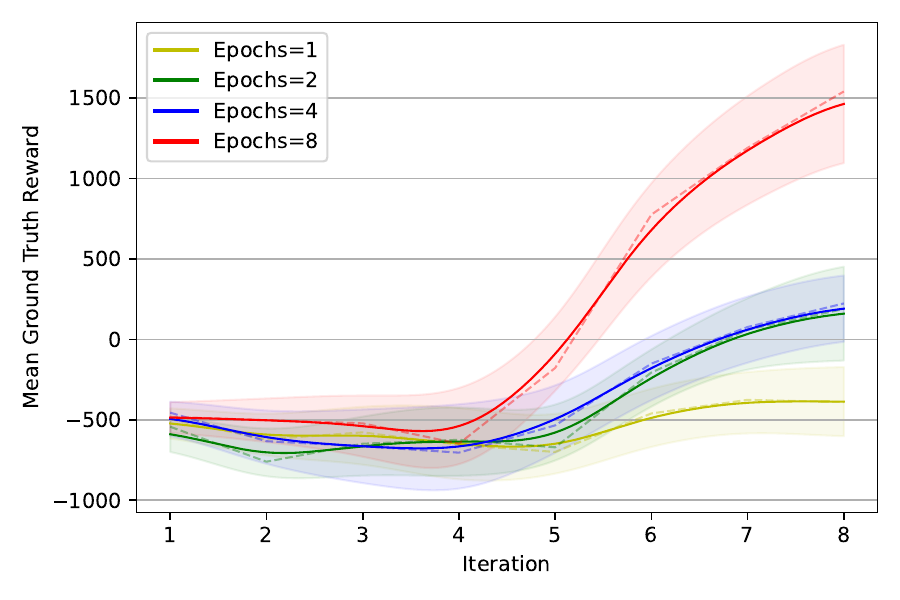}
        \caption{Half Cheetah, $n_\text{demos}=4$, $n_\text{prefs}=256$}
    \end{subfigure}
    \begin{subfigure}[b]{0.35\textwidth}
        \includegraphics[width=\textwidth]{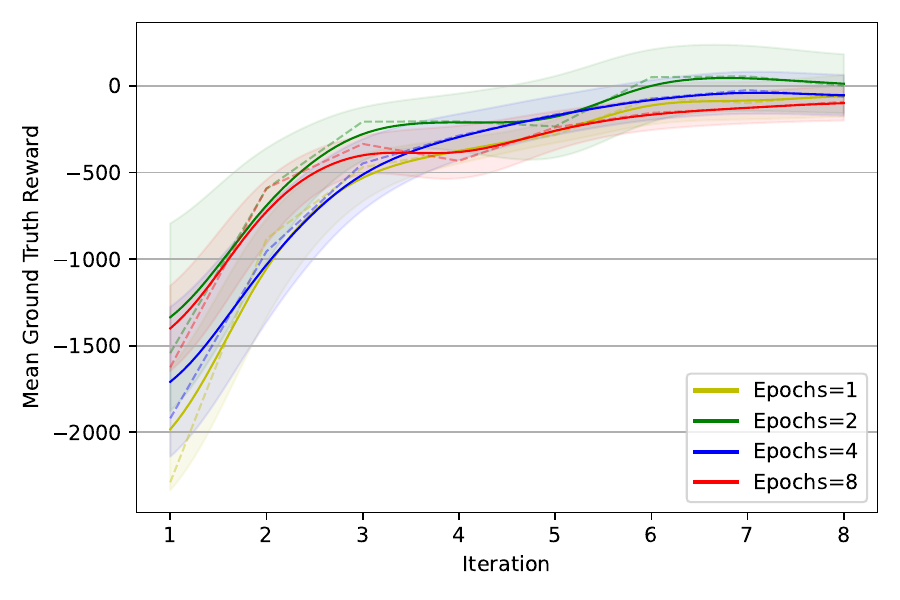}
        \caption{Cliff Walking, $n_\text{demos}=2$, $n_\text{prefs}=64$}
    \end{subfigure}
    \begin{subfigure}[b]{0.35\textwidth}
        \includegraphics[width=\textwidth]{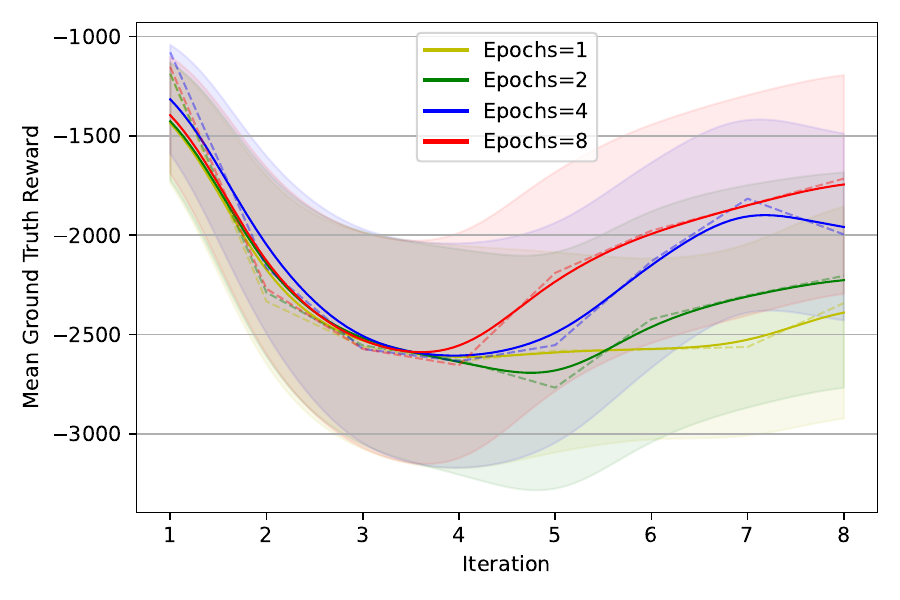}
        \caption{Lunar Lander, $n_\text{demos}=4$, $n_\text{prefs}=256$}
    \end{subfigure}
    \begin{subfigure}[b]{0.35\textwidth}
        \includegraphics[width=\textwidth]{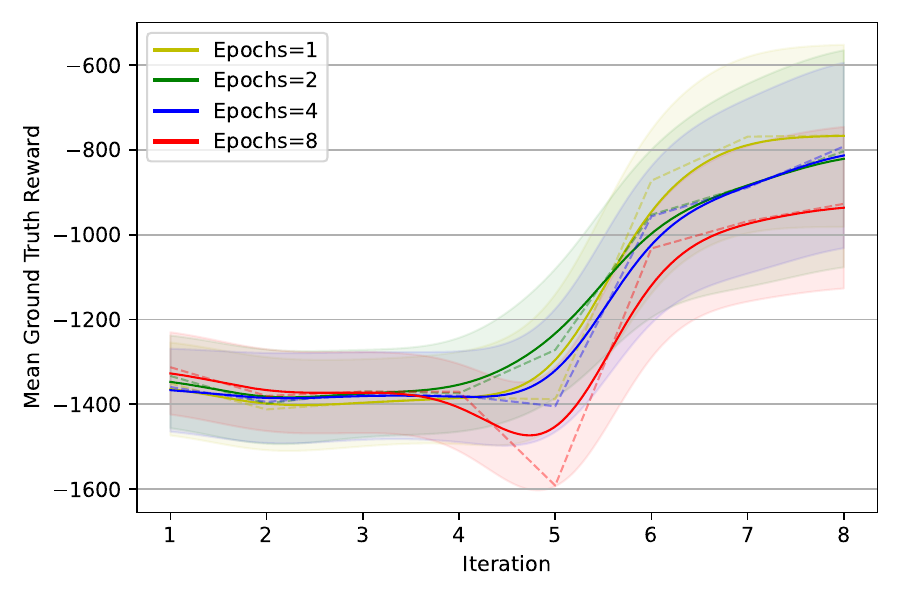}
        \caption{Ant, $n_\text{demos}=4$, $n_\text{prefs}=512$}
    \end{subfigure}
    \caption{Breakdown of the DeepIRL followed by RLHF baseline, for different numbers of epochs that the reward model was trained for per algorithm iteration. The lines denote the mean of the ground truth reward function, with shaded standard errors across 16 random seeds, against algorithm iterations. Solid lines are smoothed means for clarity, dashed lines give raw values.}
    \label{fig:me_demo_pref_all}
\end{figure}

\begin{figure}[h]
    \centering
    \begin{subfigure}[b]{0.35\textwidth}
        \includegraphics[width=\textwidth]{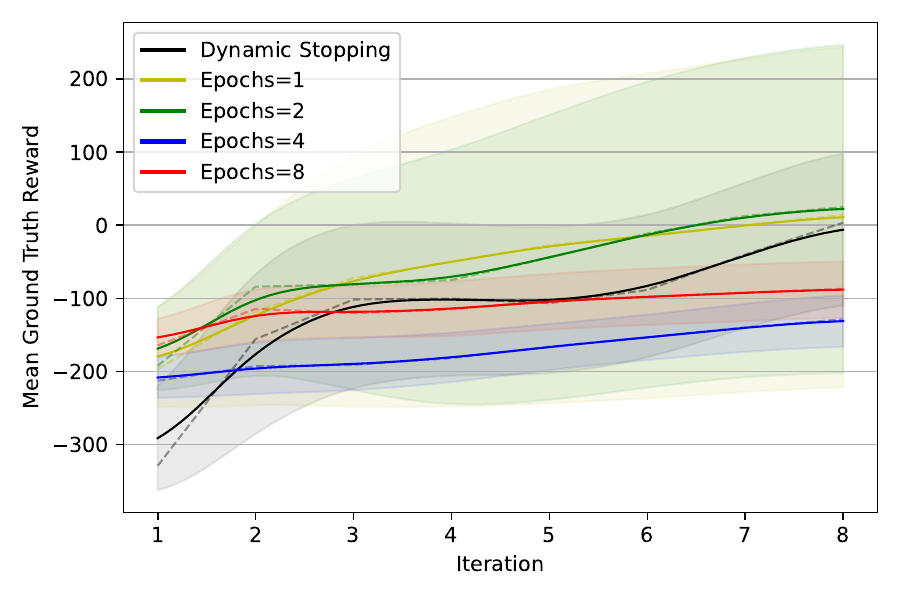}
        \caption{Half Cheetah, $n_\text{demos}=4$, $n_\text{prefs}=256$}
    \end{subfigure}
    \begin{subfigure}[b]{0.35\textwidth}
        \includegraphics[width=\textwidth]{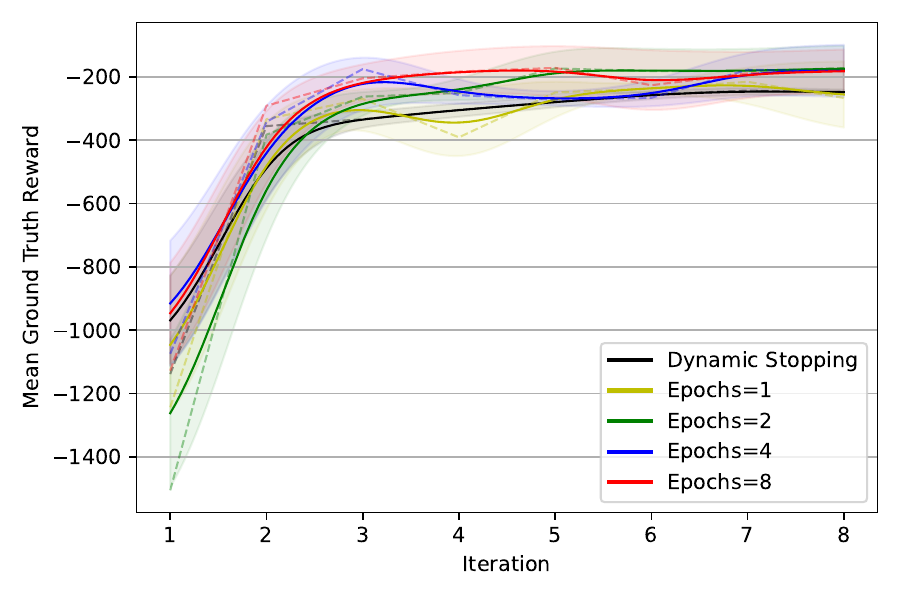}
        \caption{Cliff Walking, $n_\text{demos}=2$, $n_\text{prefs}=64$}
    \end{subfigure}
    \begin{subfigure}[b]{0.35\textwidth}
        \includegraphics[width=\textwidth]{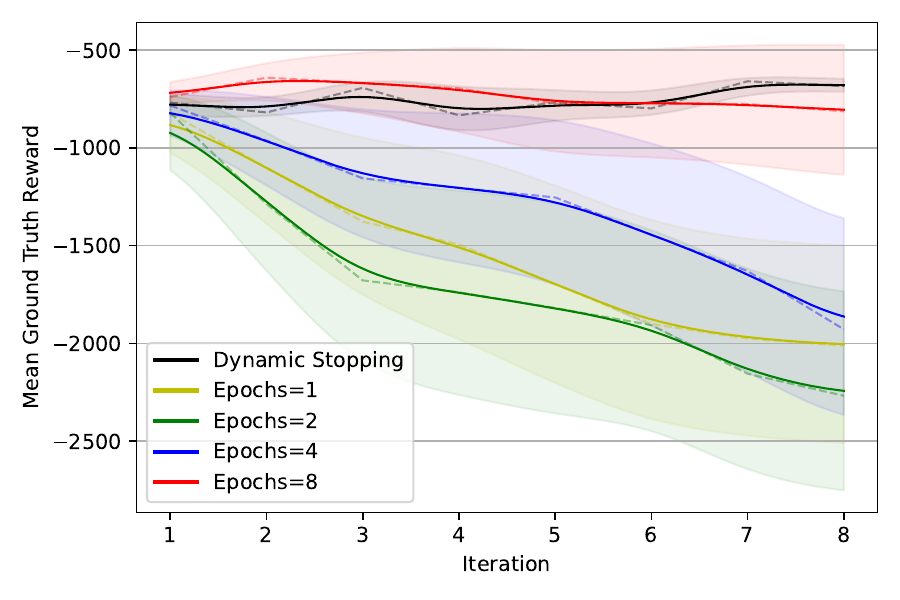}
        \caption{Lunar Lander, $n_\text{demos}=4$, $n_\text{prefs}=256$}
    \end{subfigure}
    \begin{subfigure}[b]{0.35\textwidth}
        \includegraphics[width=\textwidth]{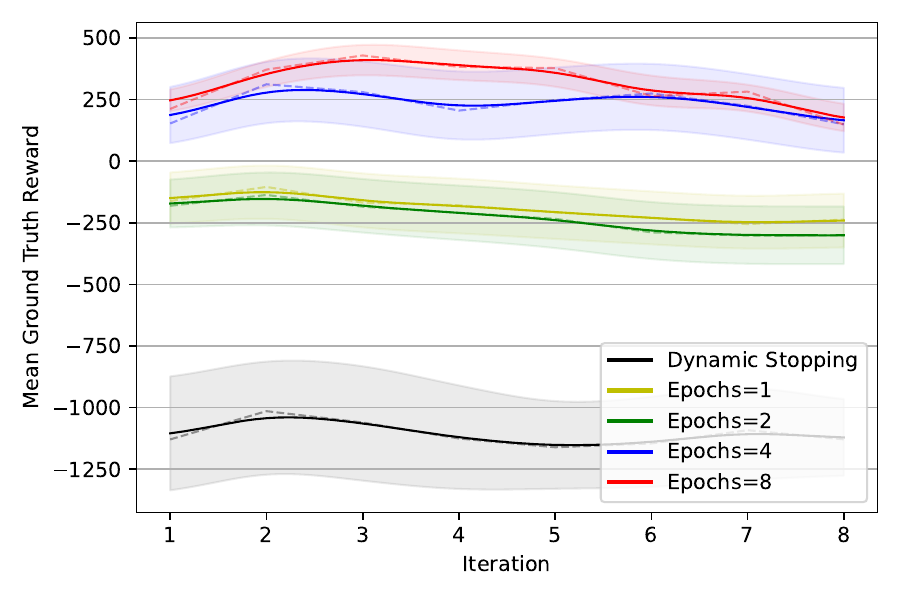}
        \caption{Ant, $n_\text{demos}=4$, $n_\text{prefs}=512$}
    \end{subfigure}
    \caption{Breakdown of the AILP baseline for positive demonstrations and preferences, for different numbers of epochs that the reward model was trained for per algorithm iteration. The lines denote the mean of the ground truth reward function, with shaded standard errors across 16 random seeds, against algorithm iterations. Solid lines are smoothed means for clarity, dashed lines give raw values.}
    \label{fig:ailp_demo_pref_all}
\end{figure}

\begin{figure}[h]
    \centering
    \begin{subfigure}[b]{0.35\textwidth}
        \includegraphics[width=\textwidth]{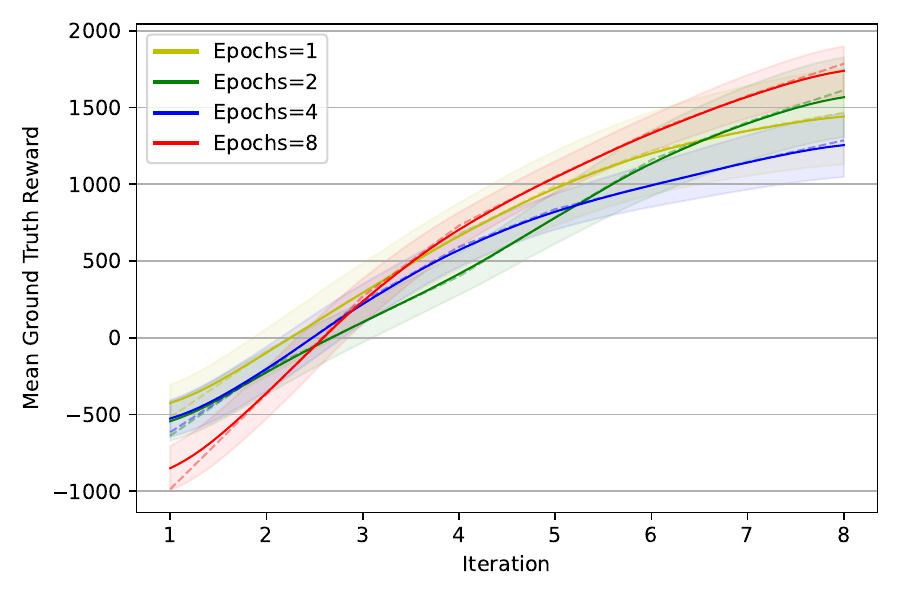}
        \caption{Half Cheetah, $n_\text{demos}=8$}
    \end{subfigure}
    \begin{subfigure}[b]{0.35\textwidth}
        \includegraphics[width=\textwidth]{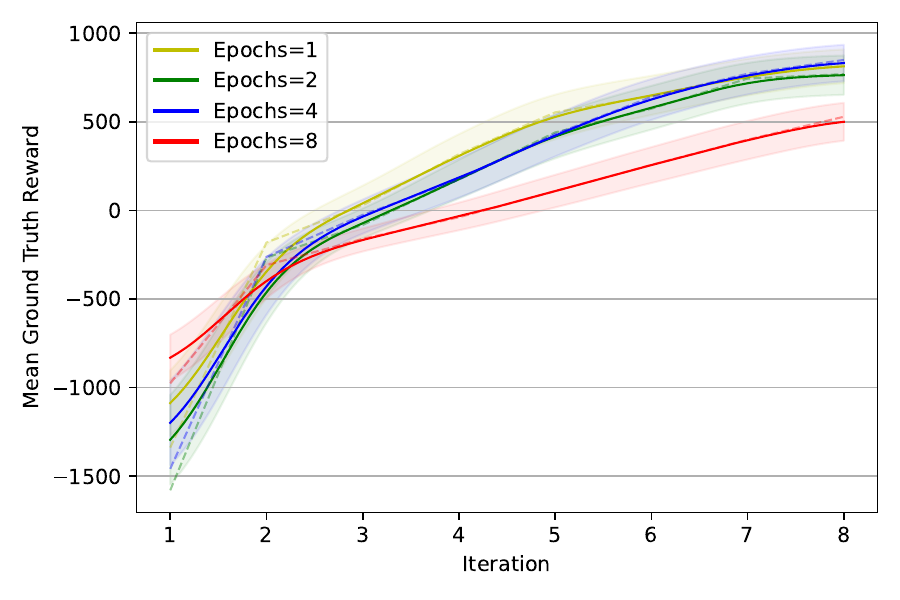}
        \caption{Cliff Walking, $n_\text{demos}=4$}
    \end{subfigure}
    \begin{subfigure}[b]{0.35\textwidth}
        \includegraphics[width=\textwidth]{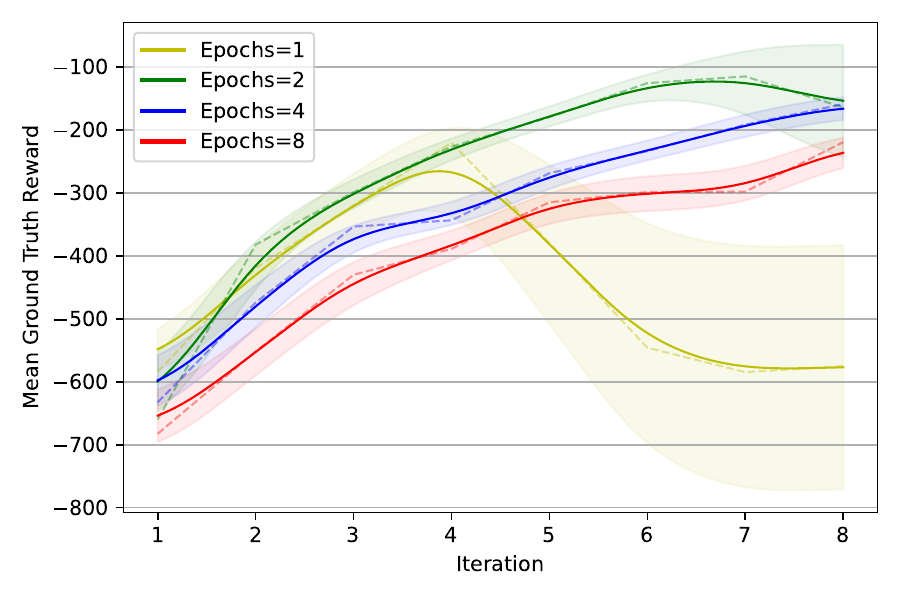}
        \caption{Lunar Lander, $n_\text{demos}=8$}
    \end{subfigure}
    \begin{subfigure}[b]{0.35\textwidth}
        \includegraphics[width=\textwidth]{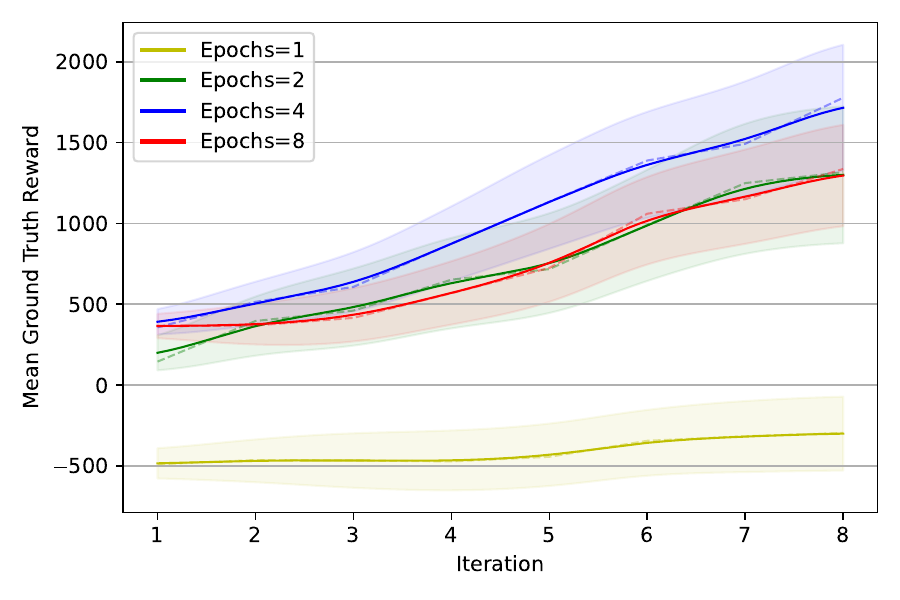}
        \caption{Ant, $n_\text{demos}=8$}
    \end{subfigure}
    \caption{Breakdown of the DeepIRL baseline, for different numbers of epochs that the reward model was trained for per algorithm iteration. The lines denote the mean of the ground truth reward function, with shaded standard errors across 16 random seeds, against algorithm iterations. Solid lines are smoothed means for clarity, dashed lines give raw values.}
    \label{fig:me_demo_all}
\end{figure}

\begin{figure}[h]
    \centering
    \begin{subfigure}[b]{0.35\textwidth}
        \includegraphics[width=\textwidth]{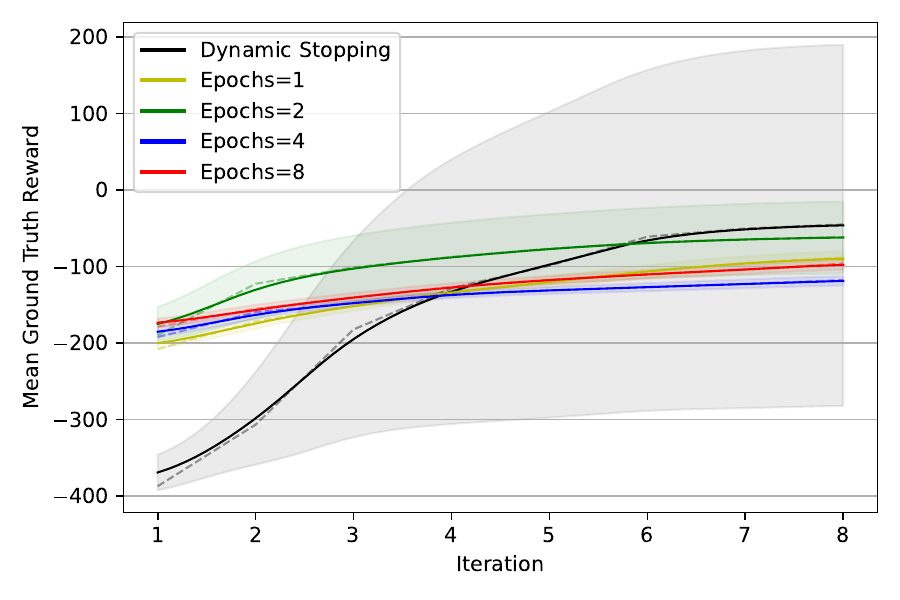}
        \caption{Half Cheetah, $n_\text{demos}=8$}
    \end{subfigure}
    \begin{subfigure}[b]{0.35\textwidth}
        \includegraphics[width=\textwidth]{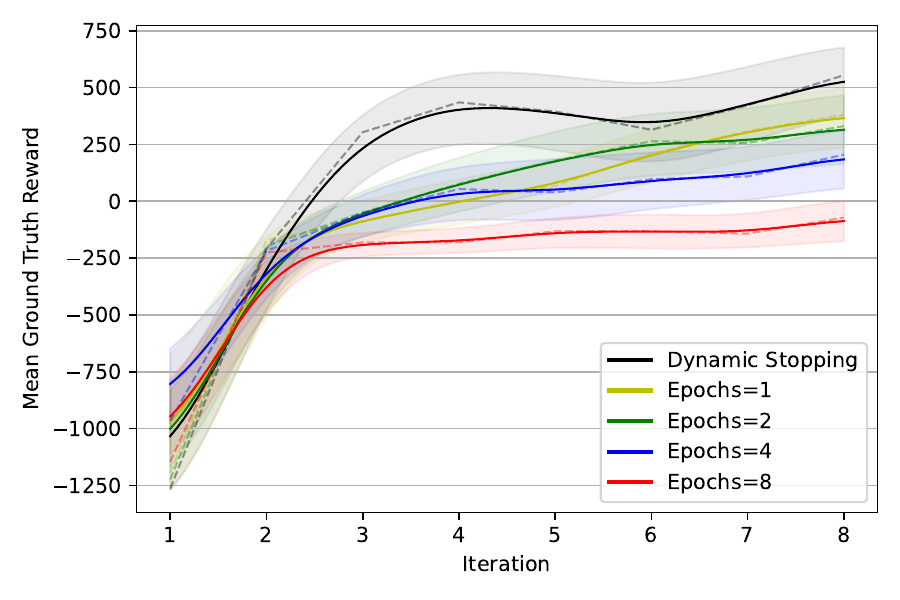}
        \caption{Cliff Walking, $n_\text{demos}=4$}
    \end{subfigure}
    \begin{subfigure}[b]{0.35\textwidth}
        \includegraphics[width=\textwidth]{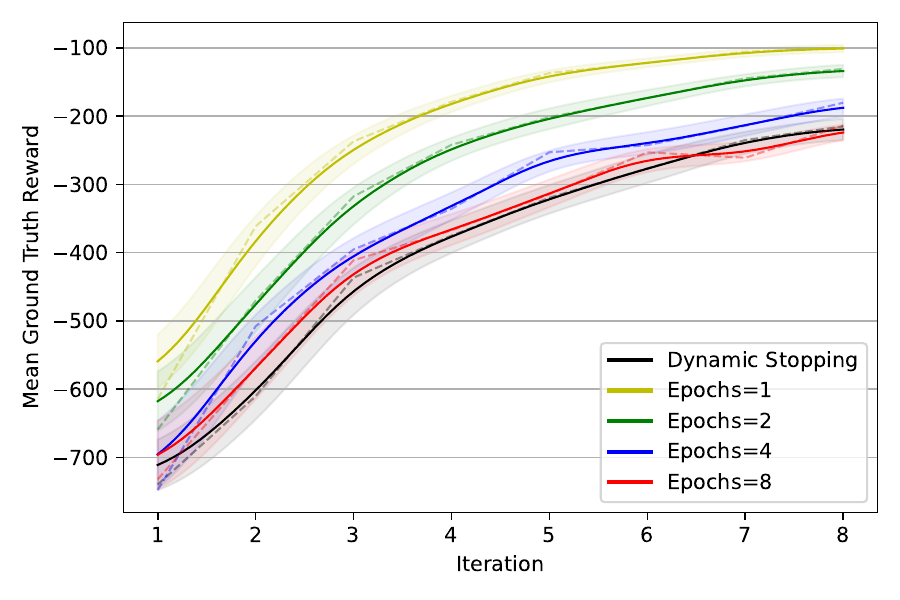}
        \caption{Lunar Lander, $n_\text{demos}=8$}
    \end{subfigure}
    \begin{subfigure}[b]{0.35\textwidth}
        \includegraphics[width=\textwidth]{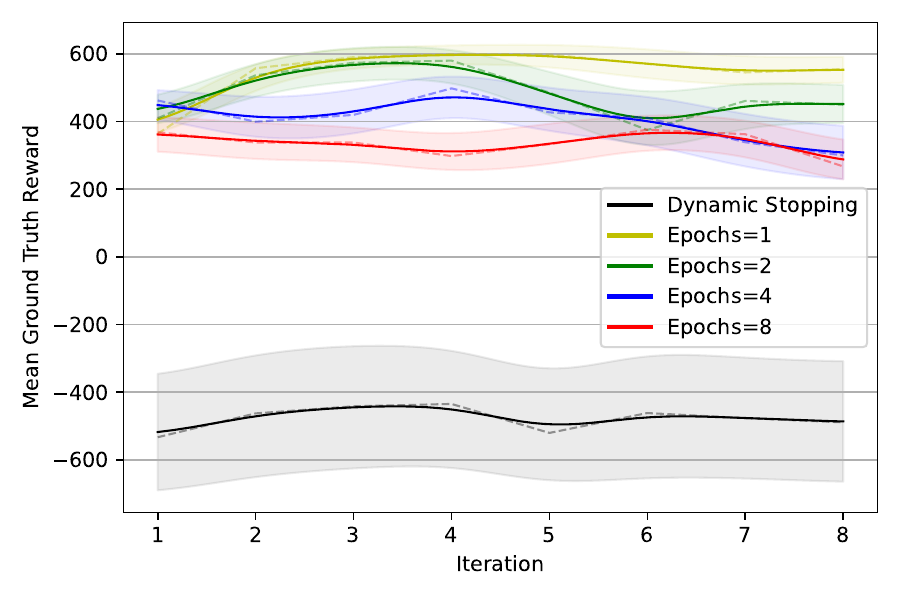}
        \caption{Ant, $n_\text{demos}=8$}
    \end{subfigure}
    \caption{Breakdown of the AILP baseline for positive demonstrations only, for different numbers of epochs that the reward model was trained for per algorithm iteration. The lines denote the mean of the ground truth reward function, with shaded standard errors across 16 random seeds, against algorithm iterations. Solid lines are smoothed means for clarity, dashed lines give raw values.}
    \label{fig:ailp_demo_all}
\end{figure}

\begin{table}[h]
  \centering
  \caption{Outliers for Cliff Walking that were removed from the main analysis. This is defined as having less than -3000 reward on any iteration from the second onwards. Note there were 16 random seeds in total. If multiple `RM epochs per iteration's are given, this is the total across them all.}
  \begin{tabular}{l c c}
    \toprule
    Method & RM epochs per iteration & Cliff Walking Outliers {\footnotesize (\%)} \\
    \midrule
    LEOPARD (preferences only)
    & Dynamic & 0 {\footnotesize (0\%)} \\
    LEOPARD (positive demonstrations only)
    & Dynamic & 3 {\footnotesize (19\%)} \\
    LEOPARD (positive demonstrations and preferences)
    & Dynamic & 2 {\footnotesize (13\%)} \\
    LEOPARD (mixed demonstrations)
    & Dynamic & 0 {\footnotesize (0\%)} \\
    LEOPARD (mixed demonstrations and preferences)
    & Dynamic & 2 {\footnotesize (13\%)} \\
    AILP (positive demonstrations only)
    & Dynamic, 1, 2, 4, 8 & 0 {\footnotesize (0\%)} \\
    AILP (positive demonstrations and preferences)
    & Dynamic & 1 {\footnotesize (6\%)} \\
    AILP (positive demonstrations and preferences)
    & 1, 2, 4, 8 & 0 {\footnotesize (0\%)} \\
    DeepIRL only
    & 1, 2, 4, 8 & 0 {\footnotesize (0\%)} \\
    DeepIRL then RLHF
    & 1 & 3 {\footnotesize (19\%)} \\
    DeepIRL then RLHF
    & 2 & 7 {\footnotesize (44\%)} \\
    DeepIRL then RLHF
    & 4 & 4 {\footnotesize (25\%)} \\
    DeepIRL then RLHF
    & 8 & 1 {\footnotesize (6\%)} \\
    \bottomrule
  \end{tabular}
  \label{table:cw_outliers}
\end{table}

\FloatBarrier

\section{Main Proofs}\label{app:proofs}

Here we more stringently define and prove the theoretical result from the end of \cref{sec:rrpo}, and then prove the models considered in \Cref{app:alt_approaches} do not satisfy it.

\begin{theorem}\label{thm:rrpo_loss}
    Upper bounds on RRPO loss give lower bounds on reward difference of related fragments.
    For all $\epsilon > 0$, if $\mathcal{L}_\text{RRPO} \le \epsilon$, then for all $\tau_a, \tau_b \in \mathcal{D}^2$ where there exists a $<_x \in \mathcal{C}$ such that $\tau_a <_x \tau_b$, we have the following: 
    \begin{align}\label{eq:result}
        R_\theta(\tau_b) - R_\theta(\tau_a) > - \frac{1}{\beta_x} \log(e^\epsilon - 1),
    \end{align}
    where $\beta_x$ is the rationality coefficient of $<_x$.
\end{theorem}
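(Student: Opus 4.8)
The plan is to exploit the product structure of $P_\text{RRPO}$ together with the nonnegativity of the smoothing term to reduce the global loss bound to a statement about a single Boltzmann factor, and then rearrange that factor algebraically. First I would observe that $\mathcal{L}_\text{Smooth}(\mathcal{D}, \theta) \ge 0$, since it is a mean of squared reward differences, so the hypothesis $\mathcal{L}_\text{RRPO} \le \epsilon$ from \cref{eq:loss} immediately gives $-\log P_\text{RRPO}(\mathcal{C} | \mathcal{D}, \theta) \le \epsilon$. Because $P_\text{RRPO}$ in \cref{eq:main} is a product of factors $P(<_j | \tau_i) \in (0, 1]$, its negative logarithm is a sum of nonnegative terms, so every individual term is itself bounded by $\epsilon$. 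The key move is to single out the factor indexed by $(\tau_b, <_x)$: since $\tau_a <_x \tau_b$ the indicator $\mathbf{1}_{\tau_a <_x \tau_b}$ equals $1$, so $e^{\beta_x R_\theta(\tau_a)}$ really does appear in the denominator of $P(<_x | \tau_b)$. This yields $-\log P(<_x | \tau_b) \le \epsilon$, equivalently $P(<_x | \tau_b) \ge e^{-\epsilon}$.

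Next I would substitute the explicit Boltzmann expression and discard every denominator summand except the one coming from $\tau_a$, which is legitimate because all summands are positive. Writing $N = e^{\beta_x R_\theta(\tau_b)}$, the inequality $P(<_x | \tau_b) \ge e^{-\epsilon}$ rearranges to $\sum_{\tau_k \in \mathcal{D}} \mathbf{1}_{\tau_k <_x \tau_b} e^{\beta_x R_\theta(\tau_k)} \le N(e^\epsilon - 1)$, and keeping only the $\tau_a$ term on the left gives $e^{\beta_x R_\theta(\tau_a)} \le e^{\beta_x R_\theta(\tau_b)}(e^\epsilon - 1)$. Since $\epsilon > 0$ guarantees $e^\epsilon - 1 > 0$, I can take logarithms and divide by $\beta_x > 0$ to obtain $R_\theta(\tau_b) - R_\theta(\tau_a) \ge -\frac{1}{\beta_x}\log(e^\epsilon - 1)$, which is the content of \cref{eq:result}. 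As a consistency check, when $\epsilon < \log 2$ the right-hand side is strictly positive, recovering the stated special case that a loss below $\log 2$ forces every related reward difference to have the correct sign.

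The step I expect to be the main obstacle is obtaining the \emph{strict} inequality of \cref{eq:result} rather than a weak one: the chain above most naturally yields $\ge$, so strictness has to be extracted from genuine slack somewhere in the argument. I would look for this slack in two places—the denominator summands discarded when bounding by the single $\tau_a$ term (any further fragment below $\tau_b$ contributes strictly), and the remaining nonnegative factors of $-\log P_\text{RRPO}$, which together with a positive smoothing term force $-\log P(<_x | \tau_b) < \epsilon$. The delicate regime is the near-degenerate one in which $\tau_a$ is the unique fragment below $\tau_b$ and no other slack is present; there one must argue directly that $P(<_x | \tau_b) \ge e^{-\epsilon}$ holds strictly. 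Everything else reduces to the algebraic rearrangement sketched above.
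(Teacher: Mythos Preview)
Your approach is essentially the same as the paper's: isolate the $(\tau_b, <_x)$ factor using nonnegativity of the remaining summands, discard all denominator contributions except the $\tau_a$ term, and rearrange. The only cosmetic difference is that the paper argues by contrapositive (assume \cref{eq:result} fails with $\le$ and deduce $\mathcal{L}_\text{RRPO} > \epsilon$), whereas you proceed directly; the algebra is identical line-for-line. You also explicitly invoke nonnegativity of $\mathcal{L}_\text{Smooth}$, which the paper's proof silently omits by writing $\mathcal{L}_\text{RRPO} = -\log P_\text{RRPO}$ at the outset.

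Your worry about strict versus weak inequality is well placed and in fact sharper than the paper's own treatment. The paper obtains strictness by asserting that ``the remaining terms are strictly positive'' and later that dropped denominator summands are strictly positive, but in the degenerate regime you flag---a single ordering $<_x$ relating only $\tau_a <_x \tau_b$, no other comparable pairs, and zero smoothing---both of those claimed strict inequalities collapse to equalities, and one can check directly that $\mathcal{L}_\text{RRPO} = \epsilon$ with $R_\theta(\tau_b) - R_\theta(\tau_a) = -\tfrac{1}{\beta_x}\log(e^\epsilon - 1)$ exactly. So the strict form of \cref{eq:result} genuinely requires some extra slack (a nontrivial smoothing term, a second related pair, or another nondegenerate factor), and your identification of this as the main obstacle is correct rather than over-cautious.
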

\begin{proof}
    We will prove this by contrapositive, that is if:
    \begin{equation}\label{eq:prop}
        R_\theta(\tau_b) - R_\theta(\tau_a) \le -\frac{1}{\beta_x} \log(e^\epsilon - 1),
    \end{equation}
    for some $\epsilon > 0$, and there exists a $<_x$ such that $\tau_a <_x \tau_b$, then $\mathcal{L}_\text{RRPO} > \epsilon$.

    Assume \cref{eq:prop} and that the relevant $<_x$ exists.
    Consider \cref{eq:loss}:
    \begin{align*}
        \mathcal{L}_\text{RRPO}(\theta)
        &= - \log P_\text{RRPO}(\mathcal{C}|\mathcal{D},\theta) \\
        &= - \sum_{(\tau_i, <_j) \in \mathcal{D} \times \mathcal{C}} \log \frac{
            \exp(\beta_j R_\theta(\tau_i))
        }{
            \exp(\beta_j R_\theta(\tau_i)) + \sum_{\tau_k \in \mathcal{D}} \mathbf{1}_{\tau_k <_j \tau_i} \exp(\beta_j R_\theta(\tau_k))
        } \\
            &= \sum_{(\tau_i, <_j) \in \mathcal{D} \times \mathcal{C}} \log \frac{
            \exp(\beta_j R_\theta(\tau_i)) + \sum_{\tau_k \in \mathcal{D}} \mathbf{1}_{\tau_k <_j \tau_i} \exp(\beta_j R_\theta(\tau_k))
        }{
            \exp(\beta_j R_\theta(\tau_i))
        } \\
            &= \sum_{(\tau_i, <_j) \in \mathcal{D} \times \mathcal{C}} \log \left(1 + \frac{
            \sum_{\tau_k \in \mathcal{D}} \mathbf{1}_{\tau_k <_j \tau_i} \exp(\beta_j R_\theta(\tau_k))
        }{
            \exp(\beta_j R_\theta(\tau_i))
        }\right).
    \end{align*}
    Consider the term $(\tau_b, <_x)$, and bring it outside the summation.
    \begin{align*}
        \mathcal{L}_\text{RRPO}(\theta)
        &= \log \left(1 + \frac{
            \sum_{\tau_k \in \mathcal{D}} \mathbf{1}_{\tau_k <_x \tau_b} \exp(\beta_x R_\theta(\tau_k))
        }{
            \exp(\beta_x R_\theta(\tau_b))
        }\right)
        + \sum_{\substack{(\tau_i, <_j) \in \mathcal{D} \times \mathcal{C} \\ (\tau_i, <_j) \neq (\tau_b, <_x)}} \log \left(1 + ... \right).
    \end{align*}
    The remaining terms are strictly positive, and $\mathbf{1}_{\tau_a <_x \tau_b}=1$.
    \begin{align*}
        \mathcal{L}_\text{RRPO}(\theta)
        &> \log \left(1 + \frac{\exp(\beta_x R_\theta(\tau_a)) + ... }{\exp(\beta_x R_\theta(\tau_b))}\right) \\
        &= \log \left(1 + \exp(\beta_x R_\theta(\tau_a) - \beta_x R_\theta(\tau_b)) + \frac{...}{\exp(\beta_x R_\theta(\tau_b))}\right) \\
        &> \log \left(1 + \exp(\beta_x (R_\theta(\tau_a) - R_\theta(\tau_b)))\right),
    \end{align*}
    by ignoring terms that are strictly positive.
    Sub in \cref{eq:prop}.
    \begin{align*}
        \mathcal{L}_\text{RRPO}(\theta)
        &> \log \left(1 + \exp \left(\beta_x \left(\frac{1}{\beta_x}\log(e^\epsilon - 1) \right) \right) \right) \\
        &= \log \left(1 + e^\epsilon - 1 \right) \\
        &= \epsilon,
    \end{align*}
    as required.
\end{proof}

Consider a special case where $\epsilon = \log 2$, \cref{eq:result} becomes:
\begin{align*}
    R_\theta(\tau_b) - R_\theta(\tau_a)
    &> - \frac{1}{\beta_x} \log(e^{\log 2} - 1) \\
    &= 0, \\
    \therefore R_\theta(\tau_b) &> R_\theta(\tau_a).
\end{align*}

\section{Alternative RRC-Derived Approaches}\label{app:alt_approaches}

RRPO and LEOPARD are very simple and natural extensions of existing work, however, they are not trivially so.
Building off RRC, there are several approaches to preference and demonstration learning that appear natural and are simple, and yet are deficient.
Here we explore two of them in the preference and ranked positive demonstrations only setting.

Let the notation be as defined in \cref{sec:leopard}.
We will assume that preferences, positive demonstration selection, and the rankings over the positive demonstrations are all independent.
Our overall likelihood function shall be:
\begin{align}
    P_\text{Feedback}(\mathcal{C}|\mathcal{D}, \theta)
    =& ~ P_\text{Pos-Demo}(\mathcal{D}_\text{pos} \succ \mathcal{D}_\text{agent} | \mathcal{D}_\text{pos}, \mathcal{D}_\text{agent}, \theta) \nonumber \\
     & \boldsymbol{\cdot} P_\text{Rank}(<_\text{pos} | \mathcal{D}_\text{pos}, \theta) \nonumber \\
     & \boldsymbol{\cdot} \prod_{(\tau_a, \tau_b) \in \mathcal{P}} P_\text{RLHF}(\tau_a \succ \tau_b | \theta),
\end{align}
where $P_\text{Rank}$ is something sensible.

We consider two potential candidates for $P_\text{Pos-Demo}$ derived via RRC in a simple manner:
\begin{align}
    P_\text{Sum-of-Choices}(...)
    &= \sum_{\tau \in \mathcal{D}_\text{pos}} P_\text{RRC}(C_\tau | \mathcal{D}_\text{pos} \cup \mathcal{D}_\text{agent}, \theta), \\
    P_\text{Choose-Best-Average}(...)
    &= P_\text{RRC}(C_{\text{Avg}(\mathcal{D}_\text{pos})} | \{\text{Avg}(\mathcal{D}_\text{pos}), \text{Avg}(\mathcal{D}_\text{agent})\}, \theta).
\end{align}

Thus:
\begin{align}\label{eq:pd_alpha}
    P_\text{Sum-of-Choices}(...)
    &= \frac{
        \sum_{\tau \in \mathcal{D}_\text{pos}} \exp( R_\theta(\tau))
    }{
        \sum_{\tau \in \mathcal{D}_\text{pos}} \exp(R_\theta(\tau)) + \sum_{\tau \in \mathcal{D}_\text{agent}} \exp(R_\theta(\tau))
    }, \\
    \label{eq:pd_beta}
    P_\text{Choose-Best-Average}(...)
    &= \frac{
        \exp\left(\frac{1}{|\mathcal{D}_\text{pos}|} \sum_{\tau \in \mathcal{D}_\text{pos}} R_\theta(\tau)\right)
    }{
        \exp\left(\frac{1}{|\mathcal{D}_\text{pos}|} \sum_{\tau \in \mathcal{D}_\text{pos}} R_\theta(\tau)\right) + \exp\left(\frac{1}{|\mathcal{D}_\text{agent}|} \sum_{\tau \in \mathcal{D}_\text{agent}} R_\theta(\tau)\right)
    },
\end{align}
with
\begin{align}
    \label{eq:pd_alpha_loss}
    \mathcal{L}_\text{SoC} &= -\log P_\text{Sum-of-Choices}, \\
    \label{eq:pd_beta_loss}
    \mathcal{L}_\text{CBA} &= -\log P_\text{Choose-Best-Average}.
\end{align}

Rationality coefficients are omitted since they are not critical to this analysis.
We shall show that these models have undesirable theoretical properties, and poorer empirical performance compared to LEOPARD.

\subsection{Theoretical Properties}\label{app:alt_approaches_theory}

Neither $P_\text{Sum-of-Choices}$ nor $P_\text{Choose-Best-Average}$ have the property that upper bounds on their negative-log-likelihood give rise to lower bounds on reward differences between demonstrated trajectories and ones sampled from the agent, unlike  $P_\text{RRPO}$.
We prove this in \cref{thm:no_bounds_alpha,thm:no_bounds_beta} in \Cref{sec:alt_model_reward_bounds}.
Whilst this may not seem too critical, its combination with the potential effects of $P_\text{Rank}$, and its interaction with exploration in RL, can cause a very undesirable failure mode.

Imagine an environment where three distinct behaviours are possible, A, B, and C.
We prefer C to B, and B to A, so we provide a demonstration of B and C each, $\tau_b, \tau_c$, and express via the ranking model that $\tau_c \succ \tau_b$.
This ranking is fitted by assigning high reward to C, and low to B.
Our agent is initialised generating from A.
Our demonstration model, seeing $\tau_c$ have high reward, does not lower the reward of A that much, and does not mind that $\tau_b$ has low reward.
We're left with low loss and yet a reward model that could prefer A to B.

Now consider that our environment has some unfavourable dynamics.
Policies that generate A, are quite different from those that generate C, with B being somewhere between the two.
Thus, to eventually generate C, our policy will first need to explore B.
However, our reward model gives it lower reward when it tries this, and so the agent sticks to what it thinks is best, behaviour A, much to our disappointment.

Whilst a little contrived, the above story highlights a certain failure mode that could occur if one combined demonstration rankings with a demonstration model that does not satisfy \cref{thm:rrpo_loss}.
If it did satisfy it, such as for RRPO and LEOPARD, then low loss cannot be achieved unless the reward model prefers B to A, preventing the issue.

Alleviating this problem by omitting the rankings is suboptimal, as we lose information.
However, $P_\text{Sum-of-Choices}$ suffers further.
It is shown in \Cref{app:loss_grads} that the gradient of $\mathcal{L}_\text{SoC}$ with respect to $\theta$ can be expressed in the following form.
\begin{align}
    \label{eq:pd_alpha_loss_grad}
    -\pdv\theta \mathcal{L}_\text{SoC}
    &= \sum_{\tau_a \in \mathcal{D}_\text{agent}} P_{\text{RRC}}(C_a|\mathcal{T},\theta) \left(
        \sum_{\tau_p \in \mathcal{D}_\text{pos}} P_{\text{RRC}}(C_p|\mathcal{D}_\text{pos},\theta) \pdv\theta R_\theta(\tau_p) - \pdv\theta R_\theta(\tau_a)
    \right),
\end{align}
where $C_i$ is the human choice for $\tau_i$, and $\mathcal{T} = \mathcal{D}_\text{pos} \cup \mathcal{D}_\text{agent}$.
We see that the reward of agent trajectories are pushed down proportional to the probability that they would be chosen out of the combined set of trajectories.
This makes sense---if our reward model thinks highly of specific agent trajectories, it ought to adjust its beliefs so that it no longer favours them.

However, the demonstration trajectories are also pushed up in reward proportional to the probability that they would be chosen.
That is to say, the better the reward model thinks the demonstrated trajectory is, the more it thinks it should increase its reward, a positive feedback loop!
In practice, the reward model is going to have some initial preferences over the demonstrated trajectories due to its initialisation.
Since this will be random, it will most likely be incorrect.
It will then proceed to reinforce its own incorrect beliefs and lock-in its own ranking of the demonstrations.
This means our reward model will not provide correct rewards to guide the agent towards better behaviour in the trajectory space around the demonstrations.
Furthermore, if it generalises from these incorrect beliefs, it could also become wrong about other parts of trajectory space, further reducing the quality of the reward signal for the agent.

\subsection{Chapter Proofs and Derivations}

\subsubsection{Reward Bounds}\label{sec:alt_model_reward_bounds}

\begin{theorem}\label{thm:no_bounds_alpha}
    Upper bounds on Sum-of-Choices loss do not give lower bounds on reward difference between demonstrations and agent trajectories.
    For all $\epsilon > 0$, if $\mathcal{L}_\text{SoC} \le \epsilon$, we cannot guarantee that 
    \begin{align}\label{eq:pd_alpha_result}
        R_\theta(\tau_p) - R_\theta(\tau_a) > f(\epsilon)
    \end{align}
    for all $\tau_p, \tau_a \in \mathcal{D}_\text{pos} \times \mathcal{D}_\text{agent}$, where $f$ is a function of type $\mathbb{R}^+ \rightarrow \mathbb{R}$.
\end{theorem}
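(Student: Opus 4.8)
The plan is to prove this by exhibiting a counterexample, since the claim is precisely that no lower-bounding function $f$ can exist. The key structural observation is that $P_\text{Sum-of-Choices}$ depends on the positive demonstrations only through the aggregate $\sum_{\tau \in \mathcal{D}_\text{pos}} \exp(R_\theta(\tau))$. Consequently a single high-reward demonstration can drive this sum---and hence the loss---arbitrarily small, while the reward assigned to any other positive demonstration is left completely unconstrained. This is exactly what breaks the bound, and it contrasts with the situation in \cref{thm:rrpo_loss}: when $|\mathcal{D}_\text{pos}| = 1$ and $|\mathcal{D}_\text{agent}| = 1$ the expression in \cref{eq:pd_alpha} collapses to the Bradley--Terry form $\sigma(R_\theta(\tau_p) - R_\theta(\tau_a))$, for which a bound does hold. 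Hence any counterexample must use at least two positive demonstrations, and I would build the construction around this asymmetry.

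Concretely, I would fix an arbitrary $\epsilon > 0$ and an arbitrary candidate function $f\colon \mathbb{R}^+ \to \mathbb{R}$, and then take $\mathcal{D}_\text{pos} = \{\tau_{p_1}, \tau_{p_2}\}$ with $\mathcal{D}_\text{agent} = \{\tau_a\}$. Set $R_\theta(\tau_a) = 0$, choose $R_\theta(\tau_{p_2})$ to be any real number strictly below $f(\epsilon)$ (possible because $f(\epsilon)$ is a fixed real once $f$ and $\epsilon$ are fixed), and set $R_\theta(\tau_{p_1}) = M$ for a parameter $M$ to be taken large. Under this assignment the loss becomes
\begin{align*}
    \mathcal{L}_\text{SoC} = \log\!\left(1 + \frac{1}{e^{M} + e^{R_\theta(\tau_{p_2})}}\right),
\end{align*}
which tends to $0$ as $M \to \infty$; so I would simply pick $M$ large enough that $\mathcal{L}_\text{SoC} \le \epsilon$ (explicitly, any $M$ with $e^{M} + e^{R_\theta(\tau_{p_2})} \ge 1/(e^{\epsilon}-1)$ suffices). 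Meanwhile, by construction $R_\theta(\tau_{p_2}) - R_\theta(\tau_a) = R_\theta(\tau_{p_2}) < f(\epsilon)$, so the desired inequality \cref{eq:pd_alpha_result} fails for the admissible pair $(\tau_{p_2}, \tau_a)$.

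The only genuine subtlety, and what I would flag as the main obstacle, is that the statement quantifies over \emph{all} functions $f$, not merely the specific bound inherited from \cref{thm:rrpo_loss}; the argument must rule out every conceivable $f$ at once. The construction handles this because the two parameters are decoupled: $M$ governs the loss while $R_\theta(\tau_{p_2})$ governs the violation, and we may drive $R_\theta(\tau_{p_2})$ below the fixed value $f(\epsilon)$ \emph{before} choosing $M$ large enough to make the loss small. Since $M$ and $R_\theta(\tau_{p_2})$ can be selected independently for each given $f$, no single $f$ can serve as a valid lower bound, which establishes the theorem. I would close by remarking that the same mechanism---sacrificing one demonstration's reward while an aggregate sum keeps the loss low---is what the later failure-mode discussion in \Cref{app:alt_approaches_theory} relies on.
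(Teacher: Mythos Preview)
Your proposal is correct and follows essentially the same route as the paper: a two-demonstration, one-agent counterexample in which one demonstration carries a large reward to force the loss below $\epsilon$, while the other demonstration's reward is free to be driven arbitrarily low relative to the agent trajectory. The paper fixes $r_1$ and sends $r_2 \to \infty$ (with $r_a$ set to saturate the loss bound) to show $r_1 - r_a$ is unbounded below; you instead fix $R_\theta(\tau_{p_2}) < f(\epsilon)$ first and then send $R_\theta(\tau_{p_1}) = M \to \infty$, which is the same mechanism with the roles relabelled and the quantification over $f$ made more explicit.
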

\begin{proof}
    We will prove this by example.

    Consider
    \begin{align*}
        \mathcal{D}_\text{pos} &= \{\tau_1, \tau_2\}, \\
        \mathcal{D}_\text{agent} &= \{\tau_a\}, \\
        R_\theta(\tau_1) &= r_1, \\
        R_\theta(\tau_2) &= r_2, \\
        R_\theta(\tau_a) &= r_a.
    \end{align*}
    We now expand \cref{eq:pd_alpha_loss} with \cref{eq:pd_alpha} and the above.
    \begin{align*}
        \mathcal{L}_\text{SoC}(\theta)
        &= -\log \left(\frac{
            e^{r_1} + e^{r_2}
        }{
            e^{r_1} + e^{r_2} + e^{r_a}
        }\right) \\
        &= \log \left(1 + \frac{e^{r_a}}{e^{r_1} + e^{r_2}}\right).
    \end{align*}
    Assume $\mathcal{L}_\text{SoC} \le \epsilon$, therefore
    \begin{align*}
        \log \left(1 + \frac{e^{r_a}}{e^{r_1} + e^{r_2}}\right)
        &\le \epsilon, \\
        r_a
        &\le \log \left( (e^\epsilon - 1)(e^{r_1} + e^{r_2}) \right).
    \end{align*}
    Let
    \begin{align*}
        r_a = \log \left( (e^\epsilon - 1)(e^{r_1} + e^{r_2}) \right).
    \end{align*}
    Consider $r_1 - r_a$, substituting in the above expression:
    \begin{align*}
        r_1 - r_a
        &= r_1 - \log ((e^\epsilon - 1)(e^{r_1} + e^{r_2})) \\
        &= r_1 - \log (e^\epsilon - 1) - \log(e^{r_1} + e^{r_2}) \\
        &\le r_1 - \log (e^\epsilon - 1) - r_2,
    \end{align*}
    as $\log(x + y) \ge \log(y)$ for positive $x$ and $y$. Thus, we see that for a fixed $r_1$ and $\epsilon$, we can choose $r_2$ and $r_a$ such that $\mathcal{L}_\text{SoC} \le \epsilon$, but $r_1 - r_a$ can be arbitrarily negative.
\end{proof}

\begin{theorem}\label{thm:no_bounds_beta}
    Upper bounds on Choose-Best-Average loss do not give lower bounds on reward difference between demonstrations and agent trajectories.
    For all $\epsilon > 0$, if $\mathcal{L}_\text{CBA} \le \epsilon$, we cannot guarantee that 
    \begin{align}\label{eq:pd_beta_result}
        R_\theta(\tau_p) - R_\theta(\tau_a) > f(\epsilon)
    \end{align}
    for all $\tau_p, \tau_a \in \mathcal{D}_\text{pos} \times \mathcal{D}_\text{agent}$, where $f$ is a function of type $\mathbb{R}^+ \rightarrow \mathbb{R}$.
\end{theorem}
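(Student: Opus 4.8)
The plan is to prove this by counterexample, mirroring the structure of the proof of \cref{thm:no_bounds_alpha}. The crucial observation is that $P_\text{Choose-Best-Average}$ depends on the positive-demonstration rewards only through their arithmetic mean (see \cref{eq:pd_beta}); consequently I can drive any single demonstration's reward arbitrarily low while holding this mean---and hence the loss---fixed, simply by raising another demonstration's reward to compensate.

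Concretely, I would take $\mathcal{D}_\text{pos} = \{\tau_1, \tau_2\}$ with rewards $R_\theta(\tau_1) = r_1$, $R_\theta(\tau_2) = r_2$, and $\mathcal{D}_\text{agent} = \{\tau_a\}$ with $R_\theta(\tau_a) = r_a$. Substituting into \cref{eq:pd_beta,eq:pd_beta_loss} collapses the loss to $\mathcal{L}_\text{CBA} = \log(1 + \exp(r_a - \tfrac{1}{2}(r_1 + r_2)))$, so the hypothesis $\mathcal{L}_\text{CBA} \le \epsilon$ is equivalent to the single inequality $r_a \le \tfrac{1}{2}(r_1 + r_2) + \log(e^\epsilon - 1)$, which is well-defined for every $\epsilon > 0$ since then $e^\epsilon - 1 > 0$.

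Saturating this bound by setting $r_a = \tfrac{1}{2}(r_1 + r_2) + \log(e^\epsilon - 1)$ gives $r_1 - r_a = \tfrac{1}{2}(r_1 - r_2) - \log(e^\epsilon - 1)$. Now I fix $r_1$ and $\epsilon$ and let $r_2 \to +\infty$: the loss stays at exactly $\epsilon$, yet $r_1 - r_a \to -\infty$. Taking $\tau_p = \tau_1$ then shows that no function $f$ of $\epsilon$ alone can lower-bound $R_\theta(\tau_p) - R_\theta(\tau_a)$, as required.

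There is no real calculational obstacle here; the work is in the right choice of example. The conceptual heart is recognising that the averaging in Choose-Best-Average is precisely what breaks the bound: unlike RRPO, which in \cref{thm:rrpo_loss} compares each fragment individually against every fragment below it, Choose-Best-Average aggregates all positive demonstrations into one value, so an arbitrarily good demonstration can mask an arbitrarily bad one without penalty. The one detail I would double-check is that a two-element $\mathcal{D}_\text{pos}$ is genuinely needed---a singleton $\mathcal{D}_\text{pos}$ would make the averaged reward coincide with the single demonstration's reward, recovering a bound---which confirms that the failure is specific to aggregating multiple demonstrations.
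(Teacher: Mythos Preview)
Your proposal is correct and follows essentially the same route as the paper: the same two-demonstration/one-agent example, the same reduction of $\mathcal{L}_\text{CBA}$ to $\log(1+\exp(r_a-\tfrac{1}{2}(r_1+r_2)))$, saturating at $r_a=\tfrac{1}{2}(r_1+r_2)+\log(e^\epsilon-1)$, and sending $r_2\to+\infty$ with $r_1,\epsilon$ fixed. Your additional remarks on why averaging is the culprit and why a singleton $\mathcal{D}_\text{pos}$ would not work are accurate and go slightly beyond what the paper states explicitly.
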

\begin{proof}
    We will proceed similarly to the above, assuming the same notation.
    
    Expanding \cref{eq:pd_beta_loss} with \cref{eq:pd_beta}.
    \begin{align*}
        \mathcal{L}_\text{CBA}(\theta)
        &= -\log \left(\frac{
            \exp\left(\frac{1}{2}(r_1 + r_2)\right)
        }{
            \exp\left(\frac{1}{2}(r_1 + r_2)\right) + \exp(r_a)
        }\right) \\
        &= \log \left(1 + \frac{\exp(r_a)}{\exp\left(\frac{1}{2}(r_1 + r_2)\right)}\right) \\
        &= \log \left(1 + \exp\left(r_a - \frac{1}{2}(r_1 + r_2)\right)\right).
    \end{align*}
    Assume $\mathcal{L}_\text{CBA} \le \epsilon$, therefore
    \begin{align*}
        \log \left(1 + \exp\left(r_a - \frac{1}{2}(r_1 + r_2)\right)\right)
        &\le \epsilon, \\
        r_a
        &\le \log (e^\epsilon - 1) + \frac{1}{2}(r_1 + r_2).
    \end{align*}
    Let
    \begin{align*}
        r_a = \log (e^\epsilon - 1) + \frac{1}{2}(r_1 + r_2).
    \end{align*}
    Consider $r_1 - r_a$, substituting in the above expression:
    \begin{align*}
        r_1 - r_a
        &= r_1 - \log (e^\epsilon - 1) - \frac{1}{2}(r_1 + r_2).
    \end{align*}
    Again, we see that for a fixed $r_1$ and $\epsilon$, we can choose $r_2$ and $r_a$ such that $\mathcal{L}_\text{SoC} \le \epsilon$, but $r_1 - r_a$ can be arbitrarily negative.
\end{proof}

\subsubsection{Loss Gradients}\label{app:loss_grads}

Here we will show that the gradient with respect to $\theta$ of $\mathcal{L}_\text{SoC}$ can be expressed in the form given in \cref{eq:pd_alpha_loss_grad} of \cref{app:alt_approaches_theory}.

First we give a simplification of deterministic RRC with $\beta=1$ and $\psi(x)=x$ for all $x$, and some additional notation:
\begin{align*}
    C &: () \rightarrow \mathcal{D}, \\
    P_{\text{RRC}}(C_i|\mathcal{D},\theta)
    &= \frac{e^{R_\theta (\tau_i)}}{\sum_{\tau_j \in \mathcal{D}} e^{R_\theta (\tau_j)}}, \\
    \mathcal{T} &= \mathcal{D}_\text{pos} \cup \mathcal{D}_\text{agent}.
\end{align*}
Now we derive some useful identities.
\begin{align}
    \label{eq:lse_deriv}
    \pdv\theta \log \sum_{\tau \in \mathcal{D}} e^{ R_\theta(\tau)}
    &= \frac{
        \pdv\theta \sum_{\tau_i \in \mathcal{D}} e^{ R_\theta(\tau_i)}
    }{
        \sum_{\tau_j \in \mathcal{D}} e^{ R_\theta(\tau_j)}
    } \nonumber \\
    &= \sum_{\tau_i \in \mathcal{D}} \frac{
        \pdv\theta e^{ R_\theta(\tau_i)}
    }{
        \sum_{\tau_j \in \mathcal{D}} e^{ R_\theta(\tau_j)}
    } \nonumber \\
    &= \sum_{\tau_i \in \mathcal{D}} \frac{
        e^{ R_\theta(\tau_i)}
    }{
        \sum_{\tau_j \in \mathcal{D}} e^{ R_\theta(\tau_j)}
    } \pdv\theta R_\theta(\tau_i) \nonumber \\
    &= \sum_{\tau_i \in \mathcal{D}} P_{\text{RRC}}(C_i|\mathcal{D},\theta) \pdv\theta R_\theta(\tau_i),
\end{align}
\begin{align}
    \label{eq:rrc_refac}
    P_{\text{RRC}}(C_i|\mathcal{A},\theta)
    &= \frac{e^{R_\theta (\tau_i)}}{\sum_{\tau_j \in \mathcal{A}} e^{R_\theta (\tau_j)}} \nonumber \\
    &= \frac{e^{R_\theta (\tau_i)}}{\sum_{\tau_j \in \mathcal{A}} e^{R_\theta (\tau_j)}}
    \frac{
        \sum_{\tau_k \in \mathcal{A} \cup \mathcal{B}} e^{R_\theta (\tau_k)}
    }{
        \sum_{\tau_k \in \mathcal{A} \cup \mathcal{B}} e^{R_\theta (\tau_k)}
    } \nonumber \\
    &= \frac{
        P_{\text{RRC}}(C_i|\mathcal{A} \cup \mathcal{B},\theta)
    }{
        {\sum_{\tau_j \in \mathcal{A}} P_{\text{RRC}}(C_j|\mathcal{A} \cup \mathcal{B},\theta)}
    },
\end{align}
\begin{align}
    P_{\text{RRC}}(C_i|\mathcal{A}, \theta)
    - P_{\text{RRC}}(C_i|\mathcal{A} \cup \mathcal{B}, \theta)
    =& \frac{
        P_{\text{RRC}}(C_i|\mathcal{A} \cup \mathcal{B},\theta)
    }{
        {\sum_{\tau_j \in \mathcal{A}} P_{\text{RRC}}(C_j|\mathcal{A} \cup \mathcal{B},\theta)}
    }
    - P_{\text{RRC}}(C_i|\mathcal{A} \cup \mathcal{B}, \theta) \nonumber \\
    =& \frac{
        P_{\text{RRC}}(C_i|\mathcal{A} \cup \mathcal{B},\theta)\left(
            1 - \sum_{\tau_j \in \mathcal{A}} P_{\text{RRC}}(C_i|\mathcal{A} \cup \mathcal{B}, \theta)
        \right)
    }{
        {\sum_{\tau_j \in \mathcal{A}} P_{\text{RRC}}(C_j|\mathcal{A} \cup \mathcal{B},\theta)}
    }
    \nonumber \\
    =& \frac{
        P_{\text{RRC}}(C_i|\mathcal{A} \cup \mathcal{B},\theta) \sum_{\tau_k \in \mathcal{B}} P_{\text{RRC}}(C_k|\mathcal{A} \cup \mathcal{B}, \theta)
    }{
        {\sum_{\tau_j \in \mathcal{A}} P_{\text{RRC}}(C_j|\mathcal{A} \cup \mathcal{B},\theta)}
    }
    \nonumber \\
    =& \sum_{\tau_k \in \mathcal{B}} P_{\text{RRC}}(C_k|\mathcal{A} \cup \mathcal{B}, \theta) \frac{
        P_{\text{RRC}}(C_i|\mathcal{A} \cup \mathcal{B},\theta)
    }{
        {\sum_{\tau_j \in \mathcal{A}} P_{\text{RRC}}(C_j|\mathcal{A} \cup \mathcal{B},\theta)}
    }
    \nonumber \\
    =& \sum_{\tau_k \in \mathcal{B}} P_{\text{RRC}}(C_k|\mathcal{A} \cup \mathcal{B}, \theta) P_{\text{RRC}}(C_i|\mathcal{A},\theta)
\end{align}

Now we use these identities to derive the special form of the gradient of $\mathcal{L}_\text{SoC}$.
\begin{align}
    - \pdv\theta \mathcal{L}_\text{SoC}
    =& \pdv\theta \log \frac{
        \sum_{\tau \in \mathcal{D}_\text{pos}} e^{ R_\theta(\tau)}
    }{
        \sum_{\tau \in \mathcal{D}_\text{pos}} e^{R_\theta(\tau)} + \sum_{\tau \in \mathcal{D}_\text{agent}} e^{R_\theta(\tau)}
    } \nonumber \\
    =& \pdv\theta \log \sum_{\tau \in \mathcal{D}_\text{pos}} e^{ R_\theta(\tau)}
    - \pdv\theta \log \sum_{\tau \in \mathcal{T}} e^{R_\theta(\tau)} \nonumber \\
    =& \sum_{\tau_p \in \mathcal{D}_\text{pos}} P_{\text{RRC}}(C_p|\mathcal{D}_\text{pos}, \theta) \pdv\theta R_\theta(\tau_p)
    - \sum_{\tau_i \in \mathcal{T}} P_{\text{RRC}}(C_i|\mathcal{T}, \theta) \pdv\theta R_\theta(\tau_i) \nonumber \\
    =& \sum_{\tau_p \in \mathcal{D}_\text{pos}} P_{\text{RRC}}(C_p|\mathcal{D}_\text{pos}, \theta) \pdv\theta R_\theta(\tau_p)
    - \sum_{\tau_p \in \mathcal{D}_\text{pos}} P_{\text{RRC}}(C_p|\mathcal{T}, \theta) \pdv\theta R_\theta(\tau_p) \nonumber \\
     & - \sum_{\tau_a \in \mathcal{D}_\text{agent}} P_{\text{RRC}}(C_a|\mathcal{T}, \theta) \pdv\theta R_\theta(\tau_a) \nonumber \\
    =& \sum_{\tau_p \in \mathcal{D}_\text{pos}} \left(
        P_{\text{RRC}}(C_p|\mathcal{D}_\text{pos}, \theta) - P_{\text{RRC}}(C_p|\mathcal{T}, \theta)
    \right) \pdv\theta R_\theta(\tau_p) \nonumber \\
     & - \sum_{\tau_a \in \mathcal{D}_\text{agent}} P_{\text{RRC}}(C_a|\mathcal{T}, \theta) \pdv\theta R_\theta(\tau_a) \nonumber \\
    =& \sum_{\tau_p \in \mathcal{D}_\text{pos}}
    \sum_{\tau_a \in \mathcal{D}_\text{agent}} P_{\text{RRC}}(C_a|\mathcal{T}, \theta)
    P_{\text{RRC}}(C_p|\mathcal{D}_\text{pos}, \theta)
    \pdv\theta R_\theta(\tau_p) \nonumber \\
     & - \sum_{\tau_a \in \mathcal{D}_\text{agent}} P_{\text{RRC}}(C_a|\mathcal{T}, \theta) \pdv\theta R_\theta(\tau_a) \nonumber \\
    =& \sum_{\tau_a \in \mathcal{D}_\text{agent}} P_{\text{RRC}}(C_a|\mathcal{T},\theta) \left(
        \sum_{\tau_p \in \mathcal{D}_\text{pos}} P_{\text{RRC}}(C_p|\mathcal{D}_\text{pos},\theta) \pdv\theta R_\theta(\tau_p) - \pdv\theta R_\theta(\tau_a)
    \right).
\end{align}

\end{document}